\theoremstyle{definition}
\newtheorem{example}{Example}[section] 
\theoremstyle{plain}
\newtheorem{theorem}{Theorem}[section]
\newtheorem{proposition}[theorem]{Proposition}
\newtheorem{lemma}[theorem]{Lemma}
\newtheorem{corollary}[theorem]{Corollary}
\theoremstyle{definition}
\newtheorem{definition}[theorem]{Definition}
\theoremstyle{remark}
\newtheorem{remark}[theorem]{Remark}
\newtcolorbox{resultbox}[1][]{
  colback=gray!10!white,
  colframe=gray!75!black,
  fonttitle=\bfseries,
  title=#1,
  boxrule=0.5pt,
  arc=2pt,
  left=10pt,
  right=10pt,
  top=8pt,
  bottom=8pt
}
\icmltitlerunning{Success Conditioning as Policy Improvement}
\begin{document}

\twocolumn[
  \icmltitle{Success Conditioning as Policy Improvement: \\The Optimization Problem Solved by Imitating Success}

  % It is OKAY to include author information, even for blind submissions: the
  % style file will automatically remove it for you unless you've provided
  % the [accepted] option to the icml2026 package.

  % List of affiliations: The first argument should be a (short) identifier you
  % will use later to specify author affiliations Academic affiliations
  % should list Department, University, City, Region, Country Industry
  % affiliations should list Company, City, Region, Country

  % You can specify symbols, otherwise they are numbered in order. Ideally, you
  % should not use this facility. Affiliations will be numbered in order of
  % appearance and this is the preferred way.
  \icmlsetsymbol{equal}{*}

  \begin{icmlauthorlist}
    \icmlauthor{Daniel J. Russo}{yyy}
 %   \icmlauthor{Firstname2 Lastname2}{equal,yyy,comp}
  %  \icmlauthor{Firstname3 Lastname3}{comp}
   % \icmlauthor{Firstname4 Lastname4}{sch}
   % \icmlauthor{Firstname5 Lastname5}{yyy}
   % \icmlauthor{Firstname6 Lastname6}{sch,yyy,comp}
    %\icmlauthor{Firstname7 Lastname7}{comp}
    %\icmlauthor{}{sch}
    %\icmlauthor{Firstname8 Lastname8}{sch}
    %\icmlauthor{Firstname8 Lastname8}{yyy,comp}
    %\icmlauthor{}{sch}
    %\icmlauthor{}{sch}
  \end{icmlauthorlist}

  \icmlaffiliation{yyy}{Columbia Business School, New York, NY, USA}
  %\icmlaffiliation{comp}{Company Name, Location, Country}
  %\icmlaffiliation{sch}{School of ZZZ, Institute of WWW, Location, Country}

  \icmlcorrespondingauthor{Daniel Russo}{djr2174@columbia.edu}

  % You may provide any keywords that you find helpful for describing your
  % paper; these are used to populate the "keywords" metadata in the PDF but
  % will not be shown in the document
  \icmlkeywords{Machine Learning, ICML}

  \vskip 0.3in
]

% this must go after the closing bracket ] following \twocolumn[ ...

% This command actually creates the footnote in the first column listing the
% affiliations and the copyright notice. The command takes one argument, which
% is text to display at the start of the footnote. The \icmlEqualContribution
% command is standard text for equal contribution. Remove it (just {}) if you
% do not need this facility.

% Use ONE of the following lines. DO NOT remove the command.
% If you have no special notice, KEEP empty braces:
\printAffiliationsAndNotice{}  % no special notice (required even if empty)
% Or, if applicable, use the standard equal contribution text:
% \printAffiliationsAndNotice{\icmlEqualContribution}

\begin{abstract}  
A widely used technique for improving policies is success conditioning, in which one collects trajectories, identifies those that achieve a desired outcome, and updates the policy to imitate the actions taken along successful trajectories. This principle appears under many names---rejection sampling with SFT, goal-conditioned RL, Decision Transformers---yet what optimization problem it solves, if any, has remained unclear. We prove that success conditioning exactly solves a trust-region optimization problem, maximizing policy improvement subject to a $\chi^2$ divergence constraint whose radius is determined automatically by the data. This yields an identity: relative policy improvement, the magnitude of policy change, and a quantity we call action-influence---measuring how random variation in action choices affects success rates---are exactly equal at every state. Success conditioning thus emerges as a conservative improvement operator. Exact success conditioning cannot degrade performance or induce dangerous distribution shift, but when it fails, it does so observably, by hardly changing the policy at all. We apply our theory to the common practice of return thresholding, showing this can amplify improvement, but at the cost of potential misalignment with the true objective. 
\end{abstract}

\section{Introduction}
We study success conditioning: a heuristic for improving policies in which one collects trajectories, identifies those that achieve a desired outcome, and updates the policy to imitate the actions taken along successful trajectories. This principle aims to improve performance while sidestepping explicit policy optimization, and appears across domains.

In language model alignment, rejection sampling followed by supervised fine-tuning on high-reward completions is now a core component of post-training pipelines \citep{touvron2023llama, grattafiori2024llama}. In reasoning, methods like STaR fine-tune on self-generated rationales that yield correct answers \citep{zelikman2022star}, while large-scale systems such as DeepSeek-R1 apply the same strategy to stabilize reinforcement learning \citep{guo2025deepseek}. In tool use, models are trained on calls that improve downstream predictions \citep{schick2023toolformer}, and in embodied learning, success detectors filter autonomously collected trajectories \citep{ghasemipour2025self}. In a distinct literature within reinforcement learning, the same idea appears under a different guise: return-conditioned models such as Decision Transformers generate actions conditional on achieving high returns \citep{Schmidhuber2019UDRL,Chen2021DecisionTransformer}.

Despite differing implementations, these methods share a common structure: \emph{learn to act as you would have, given that you eventually succeed}. Intuitively, this might improve upon the behavior policy by retaining what worked and discarding what did not. Yet it is unclear how to understand success conditioning in a principled manner, as it does not estimate value functions, compute policy gradients, or otherwise perform explicit policy optimization. The abstract of the original Decision Transformer paper \citep{Chen2021DecisionTransformer} emphasizes this contrast, stating informally: ``Unlike prior approaches to RL that fit value functions or compute policy gradients, Decision Transformer simply outputs the optimal actions by leveraging a causally masked Transformer.''

Subsequent work examined this interpretation and showed that, when viewed as a route to optimal policy recovery, success conditioning has fundamental limitations, particularly in stochastic environments \citep{Brandfonbrener2022WhenDoes, Paster2022Luck, Yang2023Dichotomy}. Example~\ref{ex:bandit_failure} presents a simple two-armed bandit in which a single step of success conditioning yields negligible improvement.  

At the same time, success conditioning continues to see widespread use across reinforcement learning and post-training pipelines. This raises a natural question: \emph{if success conditioning does not recover optimal policies, what is it optimizing for?}

\subsection{Our Contribution}

We show that success conditioning can be precisely interpreted as a \emph{conservative policy improvement operator}: one that finds the largest gain achievable without venturing too far outside observed behavior. The most notable failures of success conditioning have a distinctive character that aligns exactly with this interpretation. Proper success conditioning (defined by $\pi_+$ in Section~\ref{sec:formulation}) does not fail unpredictably at deployment time by moving to under-performing actions or inducing harmful distribution shift. Instead, when it fails, it does so by barely changing the policy at all---a form of over-conservatism. The boxes below summarize our primary novel contributions:

 \begin{resultbox}[The Trust-Region Equivalence]
Success conditioning is the exact solution to a trust-region policy optimization problem with unique geometry---$\chi^2$ divergence rather than KL—and a radius determined automatically by a quantity we call action-influence, which measures the variability in success rates induced by the behavior policy's action selection.
\end{resultbox}

\begin{resultbox}[The Action-Influence Identity]
 The following are equal at every state:
\begin{enumerate}
    \item Relative improvement from success conditioning, as measured by expected relative advantage
    \item Policy change from the behavior policy, as measured by $\chi^2$ divergence
    \item Action-influence under the behavior policy
\end{enumerate}
These quantities are not merely related but exactly equal, implying that the extent of policy change---and equivalently, the variability induced by action selection under the behavior policy---provides a direct diagnostic of the magnitude of improvement.\end{resultbox}

These results place success conditioning in the same family as TRPO and PPO \cite{Schulman2015TRPO,Schulman2017PPO}—trust-region methods that maximize a local approximation to performance while remaining close to the current policy. However, success conditioning differs from these methods in three fundamental ways. 

First, success conditioning is typically not formulated or implemented as a trust-region optimization procedure: it is defined via Bayesian conditioning and implemented by training sequence models on filtered trajectories. Second, the implicit trust region induced by success conditioning has a radius that is determined automatically by variation in the offline data, rather than by a user-chosen hyperparameter. Third, the geometry of the trust region is substantially more conservative. The $\chi^2$ divergence severely restricts exploration outside the support of the behavior policy, in contrast to the KL-based trust regions used in TRPO and PPO (see Section~\ref{sec:comparison}).

\paragraph{Outline of the paper.}
Section~\ref{sec:formulation} formalizes success conditioning in the setting of episodic Markov decision processes and defines the success-conditioned policy $\pi_+$. Section~\ref{sec:trust_region}  contains our main theory, showing that success conditioning exactly solves a trust-region policy optimization problem and establishing the action-influence identity.  Section~\ref{sec:estimation_error} discusses the implications of this perspective for learning $\pi_+$ from data and for the reliability of supervised fine-tuning on successful trajectories. Section~\ref{sec:extensions_general_rewards} discusses success conditioning in problems where rewards are dense. It interprets the common practice of return conditioning as optimizing a proxy reward that may amplify action-influence but risks misalignment with the true objective. Section~\ref{sec:literature} situates our results within the broader literature.

\section{Markov Decision Process Formulation}\label{sec:formulation}
We formulate the problem using the language of Markov Decision Processes. Later, we will see how examples like language model generation can fit this model. 

A trajectory in an episodic Markov decision process is a sequence $\tau=(S_1, A_1, \ldots, S_{T-1}, A_{T-1}, S_T)$  consisting of discrete states $S_t \in \mathcal{S}$ and actions $A_t \in \mathcal{A}$.  The episode ends at the first time $T$ at which $S_t \in \mathcal{T}$, where $\mathcal{T}\subset \mathcal{S}$ is a set of  terminal states. Conditioning on success is inherently conditioning on a binary event. As such, we formalize success events by partitioning $\mathcal{T}$ into  successful and failed terminal states, and define a binary reward $R(\tau)\in \{0,1\}$ indicating whether the trajectory ended in success. See Section \ref{sec:extensions_general_rewards} for a precise reduction from general bounded rewards and a discussion of proxy success criteria, including return thresholding for continuous rewards.

Observed trajectories are generated by a Markovian {\bf behavior policy} $\pi_0$. We have that $S_1$ is drawn from a fixed initial distribution $\mu$ and, at any time $t$,  the next action is drawn conditioned on the current state as $A_t \sim \pi_0(\cdot \mid S_t)$ and transitions follow a fixed kernel $S_{t+1} \sim P(\cdot \mid S_t, A_t)$. 

\paragraph{Success conditioning.} From observed trajectories and rewards, we define the success-conditioned policy  $$\pi_+(a \mid s)  = P_{\pi_0}\left(A_t=a \mid S_t=s,  R(\tau)=1\right).$$This policy mimics the distribution of actions taken along trajectories that ended in success. To avoid repeated qualifiers regarding division by zero, we assume that success is possible from any non-terminal state under $\pi_0$, though it may occur with arbitrarily small probability.
%The hope is that imitating successful actions yields a greatly improved policy.

 Note that the success-conditioned policy can be viewed as the minimizer of the next-action-prediction loss on successful trajectories,
 \begin{align*}
 \pi_+ \in \arg\min_{\hat{\pi}}\,\left\{ \ell(\hat{\pi}) := \mathbb{E}\left[-\sum_{t=1}^{T-1} \log \hat{\pi}(A_t | S_t)\right],\right\}\\
\text{where  } (S_1, A_1, \ldots, S_T)\sim P_{\pi_0}(\tau \mid R(\tau)=1)
\end{align*}
and approximated by training a sequence model on a dataset of successful trajectories. We mainly focus on exact success conditioning, touching on approximation errors in Prop.~\ref{prop:offline}.

\begin{example}[Language Model Fine-Tuning]\label{ex:language}
In language generation, a state $S_t$  is the sequence of tokens generated so far, an action $A_t  \in \mathcal{V}$ is the next token selected from vocabulary $\mathcal{V}$ and the transition is deterministic: tokens are appended as $S_{t+1} = (S_t, A_t)$. The behavior policy $\pi_0$ is the base model's next-token distribution. Success $R(\tau) =1$ indicates the completed sequence met some criterion—for instance, a correct answer or a positive human rating. 
Supervised fine-tuning on successful completions trains a model to approximate $\pi_+$: it imitates the token-level choices made along trajectories that ended in success.
\end{example}

\begin{remark}
    The state $S_t$ may encode the full history of observations up to time $t$. This is necessary when the transition kernel depends on history, but also when  $\pi_0$ is not itself a single Markov policy---for instance, if trajectories are collected from a mixture of policies. Architectures like transformers make it feasible to operate with history as state.
\end{remark}

\subsection{Some Notation}\label{sec:notation}
{\bf We follow the convention throughout that $s$ denotes a non-terminal state. Hence, the event $S_t=s$ already implies that $T>t$  and the episode has not yet terminated.}

We define a few key terms: 
\begin{itemize}
\item Success probability: $\rho(\pi) := P_{\pi}(R(\tau)=1)$
\item Value function:  $V_{\pi}(s)=P_{\pi}(R(\tau)=1 \mid S_t=s)$
\item Q-value: $Q_{\pi}(s,a) = P_{\pi}(R(\tau) =1 \mid S_t=s, A_t =a)$ 
\item  Advantage function: $A_{\pi}(s,a) = Q_{\pi}(s,a) - V_{\pi}(s)$
\item {\bf Special notation:} $A_{\pi}(s,\pi') := \mathbb{E}_{a\sim \pi'(s,\cdot)}\left[ A_{\pi}(s,a) \right]$.
\item Occupancy measure: $d_{\pi}(s) = \mathbb{E}_{\pi}\left[ \sum_{t=1}^{T-1} 1(S_t=s)  \right]$
\item Success conditioned occupancy $d^+_{\pi}(s) = \mathbb{E}_{\pi}\left[ \sum_{t=1}^{T-1} 1(S_t=s) \mid R(\tau)=1 \right]$.
\end{itemize}
Among this notation, the most atypical is the success-conditioned occupancy $d^{+}_{\pi}$. A subtle but important point---and a source of potential errors---is conflating this with $d_{\pi^{+}}$, the occupancy measure under the success-conditioned policy. The former conditions on success, selecting both for the actions taken and for ``lucky draws'' from the environment along the way. The latter imitates those actions, but cannot re-create the luck. These two measures coincide in deterministic environments, such as the language generation setting of Example~\ref{ex:language}.

\section{Action-Influence and the Failure Mode of Success Conditioning}

\subsection{The Failure Mode of Success Conditioning}
\begin{example}\label{ex:bandit_failure}
    Consider a two-armed bandit. There is no state and the process terminates after a single period. Arm 1 succeeds with probability 0.495 and arm 2 with probability 0.505. The behavior policy pulls each arm with equal probability, achieving an overall success rate of 0.5.
\end{example}

The success-conditioned policy is easy to compute. By Bayes' rule, it assigns probability $\pi_{+}(a)=P(R=1 \wedge A_1=a)/P(R=1)$ to each arm: 0.495 to arm 1 and 0.505 to arm 2. This improves the success rate from 0.5 to 0.50005---a  gain of 0.00005.

From sufficient data, it would be easy to learn the optimal policy, which pulls arm 2 always and has success rate 0.505. Success conditioning closes only 1\% of the gap to optimal.

What went wrong? Success conditioning asks: conditioned on success, which arm was pulled? Since both arms succeed at nearly the same rate, the posterior barely differs from the prior. {\bf Due to low variation in success rates across actions, the success-conditioned policy remains nearly identical to the behavior policy, offering minimal improvement}. Unlike RL methods that fail due to updating the policy dangerously---resulting in uncontrolled distribution shift--- success conditioning seems to fail observably by hardly updating the policy at all.

\subsection{Influence of the  Action-Draw}
The example above hints that the extent of improvement offered by success conditioning is governed by some notion of variability in action-success. The right measure of this turns out to be the measure below.

\begin{definition}
    The influence of the action-draw at a state  is defined by the square of the \emph{coefficient of variation} of the $Q_{\pi_0}(s, \cdot)$ over the action draw:
    \[
    \mathcal{I}_{\pi_0}(s) :=  \left(\frac{\mathrm{Stdev}_{a\sim \pi_0(\cdot|s)} \left[Q_{\pi_0}(s,a)\right]}{\mathbb{E}_{a\sim \pi_0(\cdot|s)} \left[Q_{\pi_0}(s,a)\right]} \right)^2\]
    
    We abbreviate it as the action-influence at state $s$. It measures statistical variation in success probabilities induced by the stochastic action draw under the behavior policy.  %measures how much variability in eventual success is induced by sampling an action from the behavior policy at state.
\end{definition}
\begin{remark}
It is also possible to show the alternate form,
     $$\mathcal{I}_{\pi_0}(s) = 
    \mathbb{E}_{a\sim \pi_0(\cdot|s)}\left[ \left(\frac{A_{\pi_0}(s,a)}{V_{\pi_0}(s)}\right)^2 \right].$$
\end{remark}

Notice that action-influence is zero if the behavior policy is deterministic or if no two actions result in substantially different success probabilities. It is large only when the stochasticity in the behavior policy meaningfully affects final outcomes. In Example \ref{ex:bandit_failure}, $\mathcal{I}_{\pi_0} = 0.5 \times \left(\frac{-0.005}{0.5}\right)^2 + 0.5 \times \left(\frac{0.005}{0.5}\right)^2 = 0.0001$, precisely capturing the limited signal provided by a success indicator.

\section{Success Conditioning as Trust Region Optimization}\label{sec:trust_region}
\subsection{Background on Policy Improvement Theory}\label{sec:policy_improvement}
Given a behavior policy $\pi_0$ how do we find a new policy $\pi$ with improved success probability, i.e. $\rho(\pi)>\rho(\pi_0)$. Classical dynamic programming theory suggests to find a \emph{single period deviation} from $\pi_0$, by seeking a positive one-step advantage ($A_{\pi_0}(s,\pi)>0)$ at all states. A policy gradient perspective suggests instead to improve $\pi_0$ through \emph{small, but persistent policy deviations}.  

These two views turn out to be intimately linked. The performance difference lemma \citep{Kakade2002Approx} enables us to expand $\rho(\pi)$ around the behavior policy $\pi_0$ as:
$$\rho(\pi) = \rho(\pi_0) + L_{\pi_0}(\pi) + \mathrm{Rem}_{\pi_0}(\pi)$$
where the first-order term
$$L_{\pi_0}(\pi) := \sum_{s} d_{\pi_0}(s) \left[A_{\pi_0}(s,\pi)\right]$$
measures whether the new policy takes actions with positive advantage at states visited by $\pi_0$. This term is linear in $\pi$. The remainder term in this Taylor expansion
$$\mathrm{Rem}_{\pi_0}(\pi) = \sum_{s} \left(d_{\pi}(s) - d_{\pi_0}(s)\right) A_{\pi_0}(s,\pi)$$
captures distribution shift --- changes in the aggregate advantage under the occupancy $d_{\pi}$ relative to $d_{\pi_0}$.

\subsection{A Generic Template for Trust Region Methods}
Trust region methods maximize the local approximation to policy performance $L_{\pi_0}(\pi)$ while constraining how far the new policy can deviate from the old, so that local approximation remains accurate. They usually take the form:
\begin{align}\label{eq:trust_region_template}
\max_{\pi} L_{\pi_0}(\pi) \,\,
\text{s.t.} \,\, \sum_{s} w(s) D(\pi(\cdot|s) \, ; \, \pi_0(\cdot|s)) \leq \Gamma.
\end{align}
Instantiating this template typically requires three key objects: (1) \emph{the state importance-weights} $w(\cdot)$, (ii) \emph{the divergence measure} $D$ which defines the sensitivity to possible deviations from the behavior policies action probabilities and (iii) \emph{the radius} $\Gamma$, which must delicately balance improvement potential and safety. A rich line of theoretical work studies the convergence of local policy improvement methods \citep{Kakade2002Approx,Agarwal2021Theory,Bhandari2024Global,Cen2022Fast}.

\subsection{The Optimization Problem Solved by Success Conditioning}\label{sec:sc_as_trust_region}

The next theorem shows that success conditioning is the \emph{exact} solution to the following trust region problem:
\begin{align*}
&\max_{\pi} L_{\pi_0}(\pi)  \\
&\text{s.t.} \,\, \sum_{s} d^{+}_{\pi_0}(s) \chi^2(\pi(\cdot|s) \| \pi_0(\cdot|s)) \leq \sum_{s} d^{+}_{\pi_0}(s) \mathcal{I}_{\pi_0}(s)
\end{align*}
In words, success conditioning exactly maximizes the first-order policy optimization $L_{\pi_0}(\pi)$ subject to the constraint that the policy moves no more in $\chi^2$ divergence than the aggregate action-influence under the behavior policy. Here $d_{\pi_0}^{+}$ denotes the state-occupancy under \emph{successful} trajectories generated by the behavior policy (See Sec.~\ref{sec:notation}). 

%of that fits this general template, with a specific different choices of state weighting, divergence, and radius. 
\begin{proposition}\label{prop:SC_trust_region}
    The success-conditioned policy $\pi_+$ is an optimal solution to the trust region problem \eqref{eq:trust_region_template} with 
    \begin{enumerate}
        \item State importance weights equal to the success-conditioned occupancy measure: $w(s)=d_{\pi_0}^{+}(s)$.
        \item Chi-squared divergence: $D=\chi^2(\cdot || \cdot)$.
        \item Radius $\Gamma = \sum_{s} d_{\pi_0}^+(s) \mathcal{I}_{\pi_0}(s).$
    \end{enumerate}
\end{proposition}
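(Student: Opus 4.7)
The plan is to establish optimality via Lagrangian duality, leveraging two structural identities. The first is the Bayes-rule form of the success-conditioned policy:
$$\frac{\pi_+(a|s)}{\pi_0(a|s)} = \frac{Q_{\pi_0}(s,a)}{V_{\pi_0}(s)},$$
which is immediate from $P_{\pi_0}(A_t{=}a \mid S_t{=}s, R{=}1) = P_{\pi_0}(R{=}1 \mid S_t{=}s, A_t{=}a)\,\pi_0(a|s)/V_{\pi_0}(s)$. The second is a reweighting identity $d^+_{\pi_0}(s) = d_{\pi_0}(s) V_{\pi_0}(s)/\rho(\pi_0)$, obtained by summing $P_{\pi_0}(S_t{=}s, R{=}1) = P_{\pi_0}(S_t{=}s)V_{\pi_0}(s)$ over $t$ and dividing by $\rho(\pi_0)$. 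These two identities are what make the whole argument click; everything else is mechanical.

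First I would verify feasibility. Substituting the Bayes-rule ratio into the $\chi^2$ divergence gives, pointwise at every state,
$$\chi^2(\pi_+(\cdot|s) \,\|\, \pi_0(\cdot|s)) = \mathbb{E}_{a \sim \pi_0(\cdot|s)}\!\left[\left(\frac{Q_{\pi_0}(s,a) - V_{\pi_0}(s)}{V_{\pi_0}(s)}\right)^{\!2}\right] = \mathcal{I}_{\pi_0}(s),$$
which is the alternate form of action-influence recorded in the remark. Averaging against $d^+_{\pi_0}$ yields aggregate equality, so $\pi_+$ is feasible with the trust-region constraint tight (this will give complementary slackness).

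Next I would form the Lagrangian with a single scalar multiplier $\lambda>0$ and substitute the reweighting identity to pull out a common factor $d_{\pi_0}(s)$; the problem decouples into independent per-state subproblems. At each non-terminal $s$ (where $V_{\pi_0}(s) > 0$ by the paper's standing assumption) I would parametrize $r(a) := \pi(a|s)/\pi_0(a|s)$ with the simplex constraint $\mathbb{E}_{\pi_0(\cdot|s)}[r]=1$. The per-state subproblem is a strictly concave quadratic in $r$; its first-order condition yields
$$r(a) = 1 + \frac{\rho(\pi_0)}{2\lambda V_{\pi_0}(s)}\,A_{\pi_0}(s,a),$$
where the normalization multiplier vanishes because $\mathbb{E}_{\pi_0}[A_{\pi_0}(s,\cdot)] = 0$. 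Matching this against $\pi_+$'s ratio $1 + A_{\pi_0}(s,a)/V_{\pi_0}(s)$ pins down $\lambda^\star = \rho(\pi_0)/2$, and crucially, the \emph{same} scalar $\lambda^\star$ produces $\pi_+$ as the per-state optimizer at every state simultaneously.

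The conclusion is by standard convex duality: the objective is linear in $\pi$, the constraint is convex, Slater's condition is trivial ($\pi_0$ itself achieves $\chi^2 = 0 \leq \Gamma$), and $(\pi_+, \lambda^\star)$ satisfies stationarity (per-state FOC), primal feasibility and complementary slackness (tightness from step one), and dual feasibility ($\lambda^\star>0$); these KKT conditions suffice for optimality. The main obstacle is conceptual rather than technical: recognizing that the specific state-weighting $d^+_{\pi_0}$ in the constraint is exactly what is needed for a single scalar $\lambda$ to yield $\pi_+$ uniformly across states---any other natural weighting would couple $\lambda$ to $s$. Non-negativity $r(a)\geq 0$ is automatic since $Q_{\pi_0}/V_{\pi_0}\geq 0$, so the unconstrained FOC solution is the true maximizer and the argument closes.
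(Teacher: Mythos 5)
Your proof is correct, but it takes a genuinely different route from the paper. The paper rewrites both the objective and the constraint in terms of the likelihood ratio $\pi(A\mid S)/\pi_0(A\mid S)-1$ under the sampling $S\sim d^{+}_{\pi_0}/\!\sum d^{+}_{\pi_0}$, $A\sim\pi_0(\cdot\mid S)$, then applies Cauchy--Schwarz: the objective is bounded by $\sqrt{\mathbb{E}[(\pi/\pi_0-1)^2]}\cdot\sqrt{\mathbb{E}[(A_{\pi_0}/V_{\pi_0})^2]}\le \mathbb{E}[(A_{\pi_0}/V_{\pi_0})^2]$ for any feasible $\pi$, and $\pi_+$ attains the bound because its centered ratio equals $A_{\pi_0}(S,A)/V_{\pi_0}(S)$ exactly. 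Your KKT argument rests on the same two structural identities (the Bayes-rule form of $\pi_+$ and the reweighting $d^{+}_{\pi_0}=d_{\pi_0}V_{\pi_0}/\rho(\pi_0)$) and is complete: the problem is convex (linear objective, convex quadratic constraint, simplex constraints), so stationarity plus primal/dual feasibility plus complementary slackness suffices for global optimality; your verification that the normalization multiplier vanishes via $\mathbb{E}_{\pi_0}[A_{\pi_0}(s,\cdot)]=0$, that non-negativity is inactive since $Q_{\pi_0}/V_{\pi_0}\ge 0$, and that the constraint is tight at $\pi_+$ all check out. What each approach buys: the Cauchy--Schwarz route is more elementary and makes the choice of radius transparent (equality in Cauchy--Schwarz forces the centered ratio to be proportional to the relative advantage, which is precisely what $\pi_+$ delivers), whereas your duality route produces the explicit multiplier $\lambda^\star=\rho(\pi_0)/2$ and isolates the structural reason a \emph{single} scalar decouples the per-state problems---namely that the $d^{+}_{\pi_0}$ weighting cancels the state-dependent factor $V_{\pi_0}(s)$. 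That observation is a genuine additional insight not present in the paper's proof.
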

The $\chi^2$ divergence is defined below and compared carefully to KL divergence measures in Section \ref{sec:comparison}. Roughly, we will show this is a conservative measure, which heavily \emph{penalizes exploring actions} that are extremely rare under $\pi_0$, but is \emph{tolerant of dropping actions} the behavior policy explored.

\begin{definition}\label{def:chi-squared}
    For pmfs $P$ and $Q$ over an alphabet $\mathcal{X}$,
 $$\chi^2(P||Q) = \sum_{x\in \mathcal{X}}\left(\frac{P(x)}{Q(x)}-1 \right)^2 Q(x) = {\rm Var}_{x\sim Q}\left( \frac{P(x)}{Q(x)} \right).$$
 %If there exists $x$ with $P(x)>0$ and $Q(x)=0$ then $\chi^2(P||Q) =\infty$.
\end{definition}

\subsection{An Identity Quantifying the Magnitude of Policy Improvement}\label{sec:triple_identity}
The next lemma provides further insight into the optimizer $\pi_+$. At the optimum, the constraint is binding,  so that the magnitude of policy change in $\chi^2$ divergence is exactly equal to aggregate action-influence. Moreover, both quantities are equal to the normalized objective value attained, $L_{\pi_0}(\pi_+)/\rho(\pi_0)$ --- which quantifies the first-order policy improvement relative to the success rate of the base policy. 

\begin{proposition}[Weighted Action-Influence-Identity]\label{prop:weighted_action_influenc}
    Under the success-conditioned policy $\pi_+$,
    \begin{align*}
 \underbrace{L_{\pi_0}(\pi_+)/\rho(\pi_0)}_{\text{First-order improvement}} &= \underbrace{\sum_{s} d^{+}_{\pi_0}(s) \chi^2(\pi_+(\cdot|s) \| \pi_0(\cdot|s))}_{\text{Magnitude of policy change}}\\
 &=  \underbrace{\sum_{s} d^{+}_{\pi_0}(s) \mathcal{I}_{\pi_0}(s)}_{\text{Action-influence}}
\end{align*}
\end{proposition}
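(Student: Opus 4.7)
The plan is to reduce both equalities to two short manipulations: a Bayes-rule expression for the density ratio $\pi_+/\pi_0$, and a companion identity linking the success-conditioned occupancy $d_{\pi_0}^+$ to the value-weighted occupancy $V_{\pi_0} \cdot d_{\pi_0}$.

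I would start by applying Bayes' rule at any non-terminal state $s$ to obtain
\[ \pi_+(a \mid s) = \pi_0(a \mid s) \cdot \frac{Q_{\pi_0}(s,a)}{V_{\pi_0}(s)}, \]
so the likelihood ratio equals $Q_{\pi_0}(s,a)/V_{\pi_0}(s)$. Substituting this into the variance form of $\chi^2$ from Definition~\ref{def:chi-squared}, and using that $V_{\pi_0}(s)$ is constant in $a$ while $\mathbb{E}_{a \sim \pi_0(\cdot \mid s)}[Q_{\pi_0}(s,a)] = V_{\pi_0}(s)$, yields
\[ \chi^2\!\bigl(\pi_+(\cdot \mid s) \,\|\, \pi_0(\cdot \mid s)\bigr) = \frac{\mathrm{Var}_{a \sim \pi_0}(Q_{\pi_0}(s,a))}{V_{\pi_0}(s)^2} = \mathcal{I}_{\pi_0}(s). \]
This is the pointwise form of the second equality; weighting by $d_{\pi_0}^+(s)$ and summing over $s$ yields the claimed statement.

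For the first equality, I would compute $A_{\pi_0}(s, \pi_+)$ using the same Bayes expression together with the orthogonality $\mathbb{E}_{a \sim \pi_0(\cdot\mid s)}[A_{\pi_0}(s,a)] = 0$:
\[ A_{\pi_0}(s, \pi_+) = \sum_a \pi_0(a\mid s) \frac{Q_{\pi_0}(s,a)}{V_{\pi_0}(s)} A_{\pi_0}(s,a) = \frac{\mathbb{E}_{a \sim \pi_0}\bigl[A_{\pi_0}(s,a)^2\bigr]}{V_{\pi_0}(s)} = V_{\pi_0}(s)\, \mathcal{I}_{\pi_0}(s), \]
where the last step uses the alternate form of $\mathcal{I}_{\pi_0}$ recorded in the remark after its definition. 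Hence $L_{\pi_0}(\pi_+) = \sum_s d_{\pi_0}(s)\, V_{\pi_0}(s)\, \mathcal{I}_{\pi_0}(s)$. To convert this into the right-hand side, I would establish the occupancy identity
\[ d_{\pi_0}^+(s) = \frac{V_{\pi_0}(s)\, d_{\pi_0}(s)}{\rho(\pi_0)}, \]
by interchanging the sum and the conditional expectation in the definition of $d_{\pi_0}^+$ and then applying Bayes' rule term by term: $P(S_t = s \mid R(\tau)=1)\, \rho(\pi_0) = P(R(\tau)=1 \mid S_t = s)\, P(S_t = s)$, with $P(R(\tau)=1 \mid S_t = s) = V_{\pi_0}(s)$ by the Markov property. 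Dividing through by $\rho(\pi_0)$ then produces $L_{\pi_0}(\pi_+)/\rho(\pi_0) = \sum_s d_{\pi_0}^+(s)\, \mathcal{I}_{\pi_0}(s)$, which is the first equality.

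The main obstacle, though modest, is the occupancy identity: it requires a careful invocation of the Markov property to ensure that conditioning on $S_t = s$ makes $R(\tau)$ independent of the past and produces exactly $V_{\pi_0}(s)$, and it implicitly uses the convention from Section~\ref{sec:notation} that $S_t = s$ already conditions on non-termination ($T > t$), so that $V_{\pi_0}(s)$ is well-defined in every summand. Everything else is a routine expansion of the $\chi^2$ definition and of $A_{\pi_0}(s,\pi_+)$.
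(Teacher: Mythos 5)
Your proposal is correct and follows essentially the same route as the paper: the paper's proof invokes Proposition~\ref{prop:triple_identity} for the pointwise identity $A_{\pi_0}(s,\pi_+) = V_{\pi_0}(s)\,\mathcal{I}_{\pi_0}(s) = V_{\pi_0}(s)\,\chi^2(\pi_+(\cdot|s)\|\pi_0(\cdot|s))$ and Lemma~\ref{lem:expression_for_the_sc_occupancy} for the occupancy relation $d_{\pi_0}^+(s) = V_{\pi_0}(s)\,d_{\pi_0}(s)/\rho(\pi_0)$, and you simply re-derive both of these inline (via the Bayes-rule formula of Lemma~\ref{lem:formula_for_SC_policy} and the term-by-term Bayes argument, respectively) before combining them in the same way.
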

This remarkable property indicates that success conditioning is always conservative --- it never moves more than it can improve --- and its failure mode arises when action-influence is very small, in which case it stays at the behavior policy and does little of consequence, just as in Example \ref{ex:bandit_failure}.

Among these three equal quantities, \textbf{the magnitude of policy change is particularly useful as a diagnostic}.
Policy improvement is the object of interest, but estimating $\rho(\pi^+)-\rho(\pi_0)$ directly requires deploying $\pi^+$ to collect fresh rollouts \emph{and} labeling their outcomes, which may be expensive. 
Action-influence determines the achievable magnitude of improvement but depends on value functions that success conditioning avoids estimating.
By contrast, the $\chi^2$ divergence between $\pi_+$ and $\pi_0$ is computable from the two policies alone, with no new rollouts and no new reward labels: it requires only querying both models at states visited along the already-labeled successful trajectories used for training. (See~\ref{sec:estimating_chi2})
Proposition~\ref{prop:weighted_action_influenc} links these quantities exactly, making policy movement an ex ante diagnostic of both improvement and action-influence.

\paragraph{Beyond the first-order objective.}
Proposition \ref{prop:weighted_action_influenc} is a corollary of the following more refined equality that holds at every state individually. \begin{proposition}\label{prop:triple_identity}
     For any state $s$ , 
    \begin{align*}    
    %\frac{V_{\pi_+}(s) - V_{\pi_0}(s)}{V_{\pi_0}(s)} &\overset{(*)}{\geq} 
    \underbrace{\frac{\left[A_{\pi_0}(s,\pi_+)\right]}{V_{\pi_0}(s)}}_{\text{Relative advantage}} 
    &= \underbrace{\chi^2\left(\pi_{+}(s,\cdot) || \pi_{0}(s,\cdot ) \right)}_{\text{Magnitude of policy change}}
    = \underbrace{\mathcal{I}_{\pi_0}(s)}_{\text{Action-influence}}.
\end{align*}
\end{proposition}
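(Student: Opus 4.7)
The plan is to derive the proposition from a single algebraic identity, namely the Bayes' rule expression for the likelihood ratio $\pi_+(a|s)/\pi_0(a|s)$, and then show that both the chi-squared divergence and the relative advantage reduce to the same variance-of-$Q$ expression that defines the action-influence.

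First I would apply Bayes' rule to the definition of $\pi_+$ conditionally on $S_t = s$:
\[
\pi_+(a|s) \;=\; P_{\pi_0}(A_t=a \mid S_t=s, R(\tau)=1) \;=\; \frac{P_{\pi_0}(R(\tau)=1 \mid S_t=s, A_t=a)\,\pi_0(a|s)}{P_{\pi_0}(R(\tau)=1 \mid S_t=s)} \;=\; \frac{Q_{\pi_0}(s,a)}{V_{\pi_0}(s)}\,\pi_0(a|s).
\]
This gives the key identity $\pi_+(a|s)/\pi_0(a|s) = Q_{\pi_0}(s,a)/V_{\pi_0}(s)$, which is the single fact that drives the whole proof.

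Next I would handle the right-hand equality $\chi^2(\pi_+(\cdot|s)\|\pi_0(\cdot|s)) = \mathcal{I}_{\pi_0}(s)$. Using the variance form of $\chi^2$ from Definition~\ref{def:chi-squared} and substituting the Bayes identity, the divergence becomes $\mathrm{Var}_{a\sim \pi_0(\cdot|s)}(Q_{\pi_0}(s,a)/V_{\pi_0}(s))$. Since $V_{\pi_0}(s) = \mathbb{E}_{a\sim \pi_0(\cdot|s)}[Q_{\pi_0}(s,a)]$, this is exactly the squared coefficient of variation of $Q_{\pi_0}(s,\cdot)$ under $\pi_0(\cdot|s)$, which is $\mathcal{I}_{\pi_0}(s)$ by definition.

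For the left-hand equality I would rewrite $A_{\pi_0}(s,\pi_+) = \sum_a \pi_+(a|s)\,(Q_{\pi_0}(s,a)-V_{\pi_0}(s))$ as an expectation under $\pi_0$ by multiplying and dividing by $\pi_0(a|s)$, then substituting the Bayes identity. This yields $A_{\pi_0}(s,\pi_+) = \mathbb{E}_{a\sim \pi_0(\cdot|s)}[Q_{\pi_0}(s,a)^2]/V_{\pi_0}(s) - V_{\pi_0}(s) = \mathrm{Var}_{a\sim \pi_0(\cdot|s)}(Q_{\pi_0}(s,a))/V_{\pi_0}(s)$. Dividing by $V_{\pi_0}(s)$ produces $\mathrm{Var}(Q_{\pi_0})/V_{\pi_0}(s)^2$, the same object the other two quantities collapsed to.

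There is no real obstacle here; the proof is essentially a one-line application of Bayes' rule followed by elementary variance bookkeeping. If anything, the only subtlety worth emphasizing is that the event $\{S_t = s\}$ implicitly carries $\{T > t\}$ (per the convention in Section~\ref{sec:notation}), which is what lets the Bayes computation above treat $P_{\pi_0}(R(\tau) = 1 \mid S_t = s, A_t = a)$ as $Q_{\pi_0}(s,a)$ without boundary complications. Proposition~\ref{prop:weighted_action_influenc} then follows immediately by weighting by $d^+_{\pi_0}(s)$ and noting $\sum_s d^+_{\pi_0}(s)\,A_{\pi_0}(s,\pi_+)/V_{\pi_0}(s) = L_{\pi_0}(\pi_+)/\rho(\pi_0)$, which is worth verifying separately but is not part of this proposition.
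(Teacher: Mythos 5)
Your proposal is correct and follows essentially the same route as the paper: both proofs rest on the Bayes'-rule identity $\pi_+(a|s)/\pi_0(a|s) = Q_{\pi_0}(s,a)/V_{\pi_0}(s) = 1 + A_{\pi_0}(s,a)/V_{\pi_0}(s)$ (the paper's Lemma~\ref{lem:formula_for_SC_policy}) followed by the same variance bookkeeping. The only cosmetic difference is that you work with the squared coefficient of variation of $Q_{\pi_0}$ directly (the definition of $\mathcal{I}_{\pi_0}$), whereas the paper routes through the equivalent mean-zero form $\mathbb{E}_{a\sim\pi_0}[(A_{\pi_0}(s,a)/V_{\pi_0}(s))^2]$ noted in its remark.
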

This more refined result lets us show that success conditioning improves the true success probabilities $(\rho(\pi)$ and/or $V_{\pi}(s))$ and not just the first-order approximation. A formula for the exact improvement $\rho(\pi_+) - \rho(\pi_0)$ can be given using the performance difference lemma (see Appendix~\ref{sec:exact_improvement}). 
\begin{corollary}\label{cor:monotonicity}  $\rho(\pi_+)\geq \rho(\pi_0)$ and, at any state $s$,
    \[
    \frac{V_{\pi_+}(s) - V_{\pi_0}(s)}{V_{\pi_0}(s)} \geq 
    \mathcal{I}_{\pi_0}(s) \geq 0.
    \]
\end{corollary}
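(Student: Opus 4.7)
The plan is to lift Proposition~\ref{prop:triple_identity} from a first-order statement about $L_{\pi_0}$ to a bound on the true value $V_{\pi_+}$ by invoking the performance difference lemma (PDL). The key observation is that Proposition~\ref{prop:triple_identity} rewrites the advantage of $\pi_+$ under $\pi_0$ in a remarkably clean form at every non-terminal state $s'$:
\[
A_{\pi_0}(s',\pi_+) \;=\; V_{\pi_0}(s')\,\mathcal{I}_{\pi_0}(s') \;\geq\; 0.
\]
This pointwise non-negativity is exactly the ingredient needed to turn PDL into a monotone-improvement bound, and the presence of the $V_{\pi_0}(s)$ factor gives the correct scaling for the relative-improvement inequality.

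First, I would apply the state-value form of PDL,
\[
V_{\pi_+}(s) - V_{\pi_0}(s) \;=\; \sum_{s'} d^{s}_{\pi_+}(s')\, A_{\pi_0}(s',\pi_+),
\]
where $d^{s}_{\pi_+}(s') := \mathbb{E}_{\pi_+}\!\left[\sum_{t=1}^{T-1} \mathbf{1}(S_t = s') \mid S_1 = s\right]$ is the episodic occupancy of $\pi_+$ started from $s$. Substituting the identity above makes every summand equal to $d^{s}_{\pi_+}(s')\, V_{\pi_0}(s')\, \mathcal{I}_{\pi_0}(s') \geq 0$. This already proves $V_{\pi_+}(s) \geq V_{\pi_0}(s)$ at every $s$, and taking expectations over $s \sim \mu$ yields $\rho(\pi_+) \geq \rho(\pi_0)$.

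To get the sharper quantitative inequality, I would retain only the $s' = s$ term in the PDL sum and drop the remaining non-negative terms. Since the trajectory begins at the non-terminal state $s$ at time $t = 1$, we have $d^{s}_{\pi_+}(s) \geq 1$, so
\[
V_{\pi_+}(s) - V_{\pi_0}(s) \;\geq\; V_{\pi_0}(s)\,\mathcal{I}_{\pi_0}(s),
\]
and dividing by $V_{\pi_0}(s) > 0$ gives the claim. I do not expect a real technical obstacle here; the main subtle point to be careful about is that Proposition~\ref{prop:triple_identity} must hold at \emph{every} non-terminal state the $\pi_+$-trajectory could visit, not merely on the support of $d^{+}_{\pi_0}$. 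That proposition is stated pointwise and requires only $V_{\pi_0}(s') > 0$, which follows from the standing assumption that success is possible from every non-terminal state under $\pi_0$, so the argument goes through uniformly.
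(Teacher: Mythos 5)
Your proof is correct, but it takes a different route from the paper's. The paper proves this corollary via Bellman-operator monotonicity: since $A_{\pi_0}(s,\pi_+)\geq 0$ at every state (by Proposition~\ref{prop:triple_identity}), one has $T_{\pi_+}V_{\pi_0}\succeq V_{\pi_0}$ pointwise, and iterating the operator gives $V_{\pi_+}\succeq T_{\pi_+}V_{\pi_0}=V_{\pi_0}+A_{\pi_0}(\cdot,\pi_+)$, from which the relative-improvement bound follows. You instead invoke the state-conditioned performance difference lemma, observe that every summand $d^{s}_{\pi_+}(s')\,V_{\pi_0}(s')\,\mathcal{I}_{\pi_0}(s')$ is non-negative, and keep only the $s'=s$ term using $d^{s}_{\pi_+}(s)\geq 1$ (valid because $s$ is non-terminal and the trajectory starts there). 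Both arguments hinge on exactly the same ingredient---the pointwise identity $A_{\pi_0}(s,\pi_+)=V_{\pi_0}(s)\,\mathcal{I}_{\pi_0}(s)\geq 0$---and both are standard dynamic-programming machinery; indeed your route is essentially the one the paper itself uses in Appendix~\ref{sec:exact_improvement} to derive the \emph{exact} improvement $\rho(\pi_+)-\rho(\pi_0)=\sum_s d_{\pi_+}(s)V_{\pi_0}(s)\mathcal{I}_{\pi_0}(s)$, so your argument buys the full decomposition of the improvement as a byproduct, whereas the Bellman-monotonicity route delivers only the one-term lower bound but avoids writing down occupancy measures conditioned on a starting state. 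Your closing caveat---that Proposition~\ref{prop:triple_identity} must hold at every non-terminal state visited by $\pi_+$, which it does under the standing assumption $V_{\pi_0}(s')>0$---is exactly the right thing to check, and it resolves cleanly.
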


\subsection{Comparison with Other Trust Region Optimization Problems}\label{sec:comparison}
While success conditioning appears unrelated to explicit policy optimization—requiring only supervised learning on filtered trajectories—we have shown that the Bayesian update defining $\pi_+$ is equivalently the solution to a trust-region optimization problem. This trust-region formulation differs in subtle but important ways from those commonly studied in the literature, which we now discuss.

The broad view that emerges is that success conditioning is a \emph{cautious} method. This is reflected in the geometry of its trust region: the $\chi^2$ constraint heavily penalizes assigning significant probability to actions that were rare under the behavior policy. It is also reflected in the automatically determined trust-region radius, which prevents degradation in policy performance but may severely limit improvement when action-influence is small. We now compare this optimization problem to the KL-constrained formulations that dominate the literature.

\subsubsection{Comparison with Reverse KL (TRPO).}
Trust Region Policy Optimization (TRPO) \cite{Schulman2015TRPO} and its widely used approximation, Proximal Policy Optimization (PPO) \cite{Schulman2017PPO} are  among the most widely used RL algorithms. TRPO aims to solve
\begin{align*}
\texttt{TRPO:} \quad
&\max_{\pi} \; L_{\pi_0}(\pi) \\
&\text{s.t.} \quad
\sum_{s} d_{\pi_0}(s)\,
D_{\mathrm{KL}}\!\bigl(
\pi_0(\cdot\mid s)\,\|\,\pi(\cdot\mid s)
\bigr)
\le \Gamma .
\end{align*}
This `reverse' KL constraint penalizes the new policy for \emph{dropping} actions taken by the old policy, enforcing coverage; any feasible policy must remain stochastic whenever the behavior policy was. By contrast, the trust-region problem corresponding to success conditioning replaces the reverse KL divergence with a $\chi^2$ constraint weighted by the success-conditioned occupancy measure. This change reverses the geometric bias of the trust region: \emph{instead of penalizing the removal of previously taken actions, the $\chi^2$ constraint penalizes assigning probability to actions that were rarely taken under the behavior policy.} Consequently, a policy satisfying a $\chi^2$ constraint may concentrate its probability mass on a single action, provided that action was well supported under the behavior policy (Figure~\ref{fig:trust-region-geometry}).

\subsubsection{Comparison with Forward KL (MDPO).}
Alternatives to TRPO based on forward KL or entropy-regularized objectives have also been studied, including regularized Markov decision processes  \cite{Geist2019Regularized} and mirror-descent policy optimization (MDPO)\cite{Tomar2022MDPO}.
MDPO solves
\begin{align*}
\texttt{MDPO:} \quad
&\max_{\pi} \; L_{\pi_0}(\pi) \\
&\text{s.t.} \quad
\sum_{s} d_{\pi_0}(s)\,
D_{\mathrm{KL}}\!\bigl(
\pi(\cdot\mid s)\,\|\,\pi_0(\cdot\mid s)
\bigr)
\le \Gamma .
\end{align*}
This `forward' KL constraint is qualitatively closer to the $\chi^2$ constraint: both penalize deviations outside the support of the behavior policy. However, the $\chi^2$ constraint is substantially more restrictive. Lemma~\ref{lem:rare-action-tolerance} makes this precise: under a fixed forward KL budget, the maximum probability assignable to a rare action of the behavior policy with probability $\delta$ scales as $1/\log(1/\delta)$, whereas under a $\chi^2$ constraint it scales as $\sqrt{\delta}$. For rare actions, $\sqrt{\delta} \ll 1/\log(1/\delta)$, so the $\chi^2$ trust region keeps the policy much closer to observed behavior.

\begin{figure}[t]
    \centering
    \includegraphics[width=0.8\columnwidth]{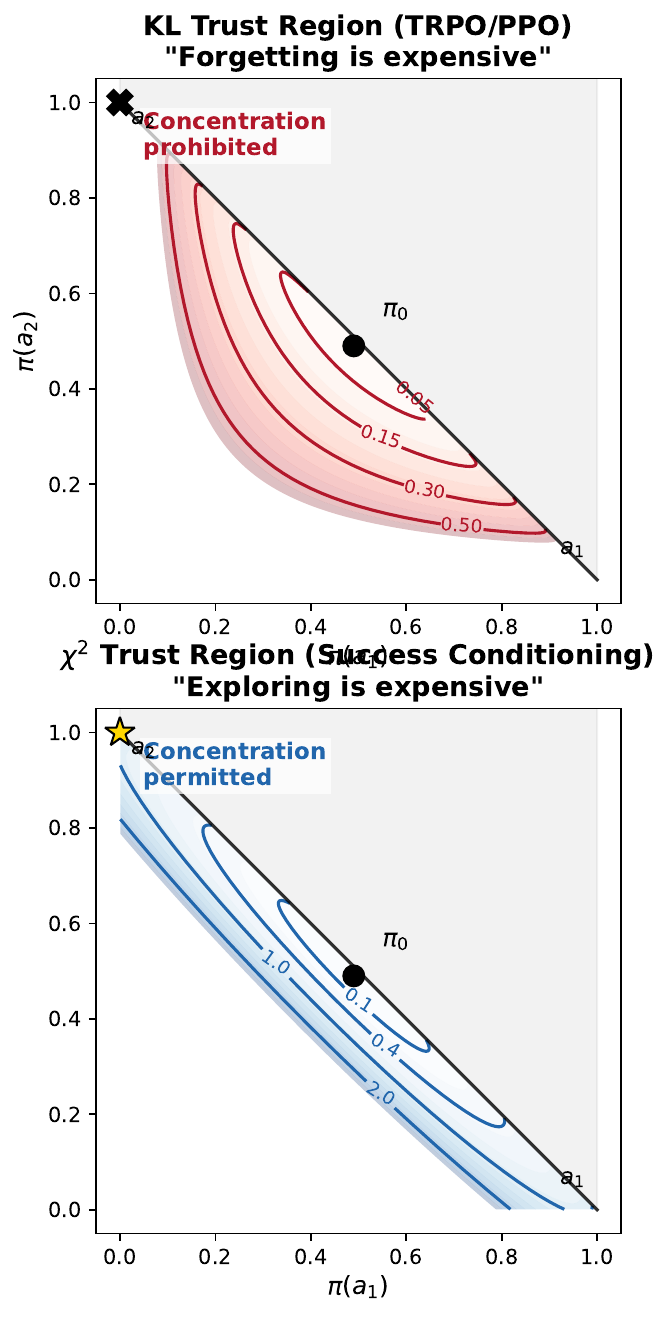}
    \caption{Trust region geometry for a three-action problem with  $\pi_0 = (0.49, 0.49, 0.02)$. The simplex constraint restricts policies to the region below the diagonal. \textbf{Top:} KL  prohibits concentration. \textbf{Bottom:} $\chi^2$  permits concentration but penalizes shifting mass toward actions that are rare under $\pi_0$.}
    \label{fig:trust-region-geometry}
\end{figure}

 \begin{lemma}[Tolerance for exploring rare actions]
\label{lem:rare-action-tolerance}
Fix an integer $k\ge 2$. Consider a single state with action set
$\{a^\star, a_1,\dots,a_k\}$ and behavior policy
\[
\pi_0(a^\star)=\delta,\qquad \pi_0(a_i)=\frac{1-\delta}{k}\ \ (i=1,\dots,k),
\]
where $\delta\in(0,1)$ is the probability of the rare action $a^\star$.
For any candidate policy $\pi$, write $p:=\pi(a^\star)$.

Fix a constant divergence budget $\varepsilon>0$ independent of $\delta$.
Then,
\[
\sup\Big\{\pi(a^\star):\ \chi^2(\pi\|\pi_0)\le \varepsilon\Big\} = \Theta(\sqrt{\delta}),
\]
whereas,
\[
\sup\Big\{\pi(a^\star):\ D_{\mathrm{KL}}(\pi\|\pi_0)\le \varepsilon\Big\} = \Theta\!\left(\frac{1}{\log(1/\delta)}\right).
\]
The implied constants depend on $\varepsilon$ but not on $k$.
\end{lemma}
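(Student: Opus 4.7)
Both claims reduce to the binary case, since the $\chi^2$ and KL divergences both satisfy the data-processing inequality under coarsening. Partitioning the action set into $\{a^\star\}$ and its complement, the marginals become $\mathrm{Ber}(p)$ versus $\mathrm{Ber}(\delta)$, so
$$\chi^2(\pi\|\pi_0) \;\geq\; \frac{(p-\delta)^2}{\delta(1-\delta)}, \qquad D_{\mathrm{KL}}(\pi\|\pi_0) \;\geq\; d(p\|\delta),$$
where $d(p\|\delta) := p\log(p/\delta) + (1-p)\log((1-p)/(1-\delta))$. For matching lower bounds, a single construction works in both cases: the uniform-tail policy $\pi(a^\star)=p$, $\pi(a_i)=(1-p)/k$. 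Under this choice the ratio $\pi(a_i)/\pi_0(a_i)=(1-p)/(1-\delta)$ is the same across all tail actions, so both inequalities above are attained with equality. The resulting upper and lower bounds on $\sup \pi(a^\star)$ are therefore $k$-free, which is one of the things we must show.

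For the $\chi^2$ assertion, the DPI bound gives $p \leq \delta + \sqrt{\varepsilon\delta(1-\delta)} = O(\sqrt{\delta})$, and the uniform-tail construction with $p$ just below this saturates it and yields $p=\Theta(\sqrt{\delta})$. This side is essentially a one-line calculation with no asymptotic subtleties.

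For the KL assertion, I would argue as follows. For $p,\delta \leq 1/2$, the second term of $d(p\|\delta)$ is $O(|p-\delta|)$ via the Taylor bound $|\log((1-p)/(1-\delta))| \leq 4|p-\delta|$, so $d(p\|\delta) = p\log(p/\delta) + O(p)$ in the relevant regime $p\geq \delta$. Now suppose $p \geq C_1/\log(1/\delta)$; then $\log(p/\delta) = \log(1/\delta) + \log p = \log(1/\delta)(1 - o_\delta(1))$, because $\log p = O(\log\log(1/\delta))$ is a vanishing fraction of $\log(1/\delta)$. Plugging back, $d(p\|\delta) \geq C_1(1-o_\delta(1)) + O(1/\log(1/\delta))$, which exceeds $\varepsilon$ for $C_1=C_1(\varepsilon)$ large and $\delta$ small, giving the upper bound $p=O(1/\log(1/\delta))$. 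The matching lower bound is the uniform-tail policy with $p=c_1/\log(1/\delta)$, whose binary KL by the same expansion is $c_1(1+o_\delta(1))$, at most $\varepsilon$ for $c_1$ small.

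The main obstacle is the KL asymptotic: one must control $\log(p/\delta)$ uniformly when $p$ and $\delta$ both tend to zero at different rates, establishing that $\log p$ is a vanishing fraction of $\log(1/\delta)$ precisely in the regime $p=\Theta(1/\log(1/\delta))$ relevant to the statement, while also tracking the $O(p)$ tail correction cleanly. Everything else---the DPI reductions, the tight uniform-tail construction, and the $k$-independence of the constants---is routine.
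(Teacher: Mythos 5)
Your proposal is correct and follows essentially the same route as the paper's proof: reduce to the binary comparison $\mathrm{Ber}(p)$ vs.\ $\mathrm{Ber}(\delta)$ via the grouping/data-processing property, use the uniform-tail policy for the matching lower bound, and control the KL asymptotics by showing $\log p$ is a vanishing fraction of $\log(1/\delta)$ in the regime $p=\Theta(1/\log(1/\delta))$. Your added observation that the uniform-tail policy attains the DPI bound with equality (so the sup is \emph{exactly} the binary-divergence sup, manifestly $k$-free) is a small but clean refinement of the paper's separate upper/lower computations.
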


\section{Relationship to Policy Gradient Methods}
\label{sec:reinforce}

Success conditioning is defined by an imitation loss: it returns the
target policy $\pi$ minimizing the next-action prediction loss along successful 
trajectories under the behavior policy $\pi_0$,
\begin{equation*}
  \ell_{\pi_0}(\pi)
  = \mathbb{E}_{\tau\sim P_{\pi_0}(\cdot\mid R=1)}
    \Big[-\textstyle\sum_{t=1}^{T-1}\log\pi(A_t\mid S_t)\Big].
\end{equation*}
Section~\ref{sec:trust_region} recharacterizes its minimizer $\pi_+$ as
the solution of a coherent policy optimization problem. Here we relate the procedure that minimizes this loss to standard policy gradient methods, finding that REINFORCE emerges as a special case in which the behavior policy is refreshed after every update.

Connecting to gradient methods requires, for the first time, a parameterized policy $\pi_\theta$; we state both procedures as population idealizations, and note their finite-sample form in the prose. Algorithm~\ref{alg:isc} is iterated success conditioning. In an outer loop it freezes a behavior policy $\pi_0 := \pi_{\theta_0}$ and
forms the imitation loss $\ell_{\pi_0}$; in an inner loop it takes $N$
gradient steps on that fixed loss. In practice the expectation defining
$\ell_{\pi_0}$ is approximated by sampling trajectories from $\pi_0$,
retaining those with $R=1$, and training on the resulting successful
dataset. A salient aspect of the procedure is that it is {\bf
off-policy} for $N>1$: across the inner steps the target policy
$\pi_\theta$ drifts away from the behavior policy $\pi_0$ whose data it
is trained on.

\begin{algorithm}[t]
\caption{Iterated Success Conditioning (population)}
\label{alg:isc}
\begin{algorithmic}
  \STATE {\bfseries Input:} parameters $\theta$, inner steps $N$, step size $\eta$
  \REPEAT
    \STATE $\theta_0 \leftarrow \theta$;\quad $\pi_0 \leftarrow \pi_{\theta_0}$
           \hfill \COMMENT{freeze behavior policy}
    \FOR{$i=1$ {\bfseries to} $N$}
      \STATE $\theta \leftarrow \theta - \eta\,\nabla_\theta\,\ell_{\pi_0}(\pi_\theta)$
    \ENDFOR
  \UNTIL{converged}
\end{algorithmic}
\end{algorithm}

\paragraph{REINFORCE as On-Policy Incremental Success Conditioning.}
What happens if instead we set $N=1$, so that the target policy is
always updated with fresh on-policy trajectories? It turns out that this
implements the policy gradient algorithm REINFORCE
(Alg~\ref{alg:reinforce}), up to a step-size rescaling. This is the
content of the following, simple, identity, where
$\rho(\pi_\theta)=\mathbb{E}_{\pi_\theta}[R(\tau)]$.

\begin{algorithm}[t]
\caption{REINFORCE, binary reward (population)}
\label{alg:reinforce}
\begin{algorithmic}
  \STATE {\bfseries Input:} parameters $\theta$, step size $\eta$
  \REPEAT
    \STATE $\theta_0 \leftarrow \theta$;\quad $\pi_0 \leftarrow \pi_{\theta_0}$
    \STATE $\theta \leftarrow \theta
           + \eta\,\nabla_\theta\,\rho(\pi_\theta)\big|_{\theta=\theta_0}$
           \hfill \COMMENT{policy gradient on $\rho$}
  \UNTIL{converged}
\end{algorithmic}
\end{algorithm}

\begin{lemma}\label{lem:reinforce}
For any $\theta_0$ with behavior policy $\pi_0=\pi_{\theta_0}$,
\begin{equation*}
  -\nabla_\theta\, \ell_{\pi_0}(\pi_\theta)\big|_{\theta=\theta_0}
  \;=\;
  \frac{1}{\rho(\pi_0)}\,\nabla_\theta\, \rho(\pi_\theta)\big|_{\theta=\theta_0}.
\end{equation*}
\end{lemma}

The identity follows by differentiating $\ell_{\pi_0}$, using that conditioning on the binary event $R=1$ reweights each trajectory by $R(\tau)/\rho(\pi_0)$, together with the score-function identity $\nabla_\theta \rho(\pi_\theta)
= \mathbb{E}_{\pi_\theta}[R(\tau)\sum_t \nabla_\theta\log\pi_\theta(A_t\mid S_t)]$.
Lemma~\ref{lem:reinforce} shows that, at $\theta=\theta_0$, the gradient of the success-conditioning loss coincides with the REINFORCE direction, up to the positive scalar $1/\rho(\pi_0)$, which can be absorbed into the step size. If 
$N=1$, every update of Algorithm~\ref{alg:isc} is taken at $\theta=\theta_0$ and aligns exactly with the REINFORCE gradient step. If $N>1$, however, the inner loop continues descending the fixed loss $\ell_{\pi_0}$ after $\pi_\theta$ has moved away from $\pi_0$. These later steps are no longer policy-gradient steps for the current policy; they are supervised steps toward the success-conditioned target of the frozen $\pi_0$, whose minimizer is $\pi_+$. 

This places policy gradient \emph{inside} the success-conditioning framework, rather than reducing success conditioning to policy gradient: REINFORCE is the incremental, on-policy special case that refreshes the behavior policy after every step. The distinction is practically important because generating and labeling trajectories is often far more costly than additional supervised steps on data already in hand. If we insist that every update be a gradient step for the current policy, reusing older data becomes an off-policy problem requiring importance weighting or other corrections. Success conditioning offers a different primitive: given trajectories from a behavior policy, one defines the target obtained by conditioning that behavior on success, and fits it. The successful trajectories from the behavior policy are then not stale samples for a current-policy gradient, but precisely the samples defining the supervised target. This view also suggests that deliberately separating the data-generating policy from the target --- for instance, by injecting exploration into data collection --- is a promising direction.

\section{Success Conditioning from Data}\label{sec:estimation_error} 

We have shown that the success-conditioned policy $\pi^+$ solves a trust-region optimization problem with guaranteed improvement. Since $\pi^+$ is equivalently the minimizer of the next-action prediction loss on successful trajectories (Section~\ref{sec:formulation}), a natural question is whether fitting this objective well offline translates to good deployment performance.

The answer takes a familiar form: deployment error is bounded by offline loss times a \emph{distribution shift} term. The distribution shift is captured by the ratio
\[
M := \sup_s \frac{d^+_{\pi_0}(s)}{d_{\pi_+}(s)}.
\]
which has a clear semantic meaning: it compares   occupancy under the learned policy to occupancy success-conditioned trajectories from $\pi_0$. (See Sec~\ref{sec:notation})

\begin{proposition}\label{prop:offline} 
Suppose an estimated policy $\hat{\pi}$ achieves excess cross-entropy loss $\Delta = \ell(\hat{\pi}) - \ell(\pi_+)$ relative to the success-conditioned policy. Then its success probability satisfies $|\rho(\hat{\pi}) - \rho(\pi_+)| \leq \sqrt{M \cdot \Delta/2}$.
\end{proposition}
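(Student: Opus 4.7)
The shape of the bound $\sqrt{M\Delta/2}$ strongly suggests Pinsker's inequality applied at the trajectory level, with $M$ absorbing a change-of-measure penalty. The plan is a standard three-step imitation-learning reduction: (i) rewrite the excess cross-entropy $\Delta$ as a per-state KL weighted by the training occupancy $d^+_{\pi_0}$; (ii) bound the performance gap by a trajectory-level total variation, Pinsker-ize to trajectory KL, and apply the chain rule to get a per-state KL weighted by the deployment occupancy $d_{\pi_+}$; and (iii) absorb the mismatch between the two weightings into $M$.

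For step (i), the successful-trajectory distribution has marginal state occupancy proportional to $d^+_{\pi_0}$ and, by the very definition of $\pi_+$, conditional action distribution $\pi_+(\cdot\mid S_t)$. Directly expanding the log-ratio in $\ell(\hat\pi)-\ell(\pi_+)$ yields
\[
\Delta = \sum_s d^+_{\pi_0}(s)\, D_{\mathrm{KL}}\!\bigl(\pi_+(\cdot\mid s)\,\|\,\hat\pi(\cdot\mid s)\bigr).
\]
For step (ii), since $R(\tau)\in\{0,1\}$ we have $|\rho(\hat\pi)-\rho(\pi_+)| \le \|P_{\pi_+}-P_{\hat\pi}\|_{\mathrm{TV}}$; Pinsker gives $\|P_{\pi_+}-P_{\hat\pi}\|_{\mathrm{TV}} \le \sqrt{D_{\mathrm{KL}}(P_{\pi_+}\|P_{\hat\pi})/2}$; and the chain rule for KL -- valid because $P_{\pi_+}$ and $P_{\hat\pi}$ share the same initial distribution and transition kernel -- decomposes
\[
D_{\mathrm{KL}}(P_{\pi_+}\|P_{\hat\pi}) = \sum_s d_{\pi_+}(s)\, D_{\mathrm{KL}}\!\bigl(\pi_+(\cdot\mid s)\,\|\,\hat\pi(\cdot\mid s)\bigr).
\]

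For step (iii), the two sums involve exactly the same per-state KL, so a pointwise importance ratio gives $\sum_s d_{\pi_+}(s) D_{\mathrm{KL}}(\cdot) \le M \sum_s d^+_{\pi_0}(s) D_{\mathrm{KL}}(\cdot) = M\Delta$. Chaining the three inequalities yields $|\rho(\hat\pi)-\rho(\pi_+)| \le \sqrt{M\Delta/2}$.

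The delicate point, and the thing I would be most careful about, is aligning the direction of the KL divergence so that the chain rule and the loss expansion put $\pi_+$ in the same slot; this is what forces Pinsker to be applied to $D_{\mathrm{KL}}(P_{\pi_+}\|P_{\hat\pi})$ rather than its reverse and, correspondingly, dictates which side of the $d_{\pi_+}/d^+_{\pi_0}$ ratio plays the role of $M$. Beyond this bookkeeping the proof is routine; \emph{no} horizon-dependent amplification appears because $R\in\{0,1\}$, so a single trajectory-level TV already controls the gap without compounding.
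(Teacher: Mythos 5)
Your proof is correct and follows essentially the same route as the paper's: expand the excess cross-entropy as a $d^+_{\pi_0}$-weighted per-state forward KL (using that successful trajectories have conditional action law $\pi_+$), convert the performance gap to trajectory-level total variation, apply Pinsker and the KL chain rule to obtain a $d_{\pi_+}$-weighted per-state KL, and absorb the occupancy mismatch into $M$. One remark: your step (iii) needs $M \ge \sup_s d_{\pi_+}(s)/d^+_{\pi_0}(s)$, the reciprocal of the ratio displayed in the paper's main-text definition of $M$, but this is precisely the direction the paper's own appendix proof uses, so you have matched the intended argument (and implicitly flagged a sign-of-the-ratio typo in the main text) rather than diverged from it.
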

Such bounds are ubiquitous in offline RL, where distribution-shift terms are often difficult to interpret or control. Here, $M = 1$ whenever state transitions are deterministic---including autoregressive sequence generation. By definition, success-conditioned trajectories have the same conditional action distribution as executing $\pi_+$, and under deterministic dynamics this implies the same induced state occupancy. In such settings, the supervised objective aligns directly with deployment performance, with no distribution-shift penalty.

\section{Dense Rewards, Return Thresholding, and Influence Amplification}\label{sec:extensions_general_rewards}
Many reinforcement learning problems involve dense rewards, but conditioning on success inherently requires specifying a binary success criterion. A common response is return thresholding: defining a trajectory as successful if the cumulative reward exceeds a specified threshold. Prior work has observed that conditioning on overly high thresholds can degrade average performance by ``counting on luck,'' emphasizing stochastic outcomes rather than reliable action choices \cite{Paster2022Luck,Yang2023Dichotomy}.

This section provides a precise perspective on this phenomenon. Section~\ref{sec:general_rewards_faithful} defines a theoretically faithful reduction from dense rewards to binary ones, allowing our policy improvement guarantees to extend directly to this setting. Relative to this faithful binary reward, return thresholding can be viewed as a proxy success criterion—one that may differ from the objective used in evaluation. We extend our policy improvement analysis to such proxy rewards, showing that they can amplify action-influence and hence improvement up to a point, but risk degrading performance when misalignment becomes severe.

\subsection{Faithful Binary Rewards}\label{sec:general_rewards_faithful}
%Our theoretical development analyzes conditioning on success, which is inherently a binary event. Problems with bounded continuous rewards can be analyzed in this framework via an equivalent randomized success labeling.
We define a theoretically faithful reduction from dense rewards to binary ones, allowing our policy improvement guarantees to extend directly. Suppose the terminal return $Y = Y(\tau)$ takes values in $[0, 1]$ (or is normalized to this range). For analysis, we can equivalently view each trajectory as associated with a binary success variable $R(\tau) \in \{0, 1\}$ drawn once from $\text{Bernoulli}(Y(\tau))$, so that $\mathbb{P}(R(\tau) = 1 \mid \tau) = Y(\tau)$. Once drawn, this label is fixed and treated as the trajectory's outcome.

This unbiased labeling preserves the original objective in expectation: since $\mathbb{E}[R(\tau)] = \mathbb{E}[Y(\tau)]$, maximizing success probability $\mathbb{P}(R(\tau) = 1)$ under any policy is equivalent to maximizing expected return $\mathbb{E}[Y(\tau)]$. Moreover, once labels are assigned, the setting matches our earlier framework exactly—$R$ is a deterministic function of the labeled trajectory, and all preceding results apply without modification. Success conditioning with respect to these binary labels therefore yields policy updates that improve the original expected-return objective.

\subsection{Proxy Rewards and Their Effects}\label{sec:general_rewards_proxy}
In practice, success is often defined using alternative, hand-crafted criteria rather than a faithful reduction of the underlying reward. We refer to such choices as \emph{proxy rewards}. A common example is return thresholding, where $\tilde{R}(\tau) = \mathbf{1}\{Y(\tau) > \theta\}$ (see Sec.~\ref{sec:general_rewards_thresholding}). The next result provides an exact characterization of the benefits and risks of conditioning on proxy rewards. %They can, indeed, magnify improvement if the action-influence (coefficient of variation of the $Q$ function) is larger under the proxy, but this comes at the risk of degrading alignment.

Let $\tilde{R}$ be a proxy-reward. We use tildes to denote the corresponding objects, with $\tilde{Q}_{\pi_0}$ denoting the $Q$-function under the behavior policy, $\tilde{\mathcal{I}}_{\pi_0}(s)$ denoting the action-influence at state $s$ (the squared coefficient of variation of $\tilde{Q}_{\pi_0}$) and $\tilde{\pi}_+(s,a)= \mathbb{P}_{\pi_0}\left(A_t=a \mid S_t=s, \tilde{R}(\tau)=1\right)$ denoting the corresponding success conditioned-policy.  

\begin{proposition}\label{prop:proxy_rewards} For any non-terminal state $s$, 
\begin{align*}
    &\frac{ A_{\pi_0}(s,\tilde{\pi}_+)}{ A_{\pi_0}(s,\pi_+)}  &=\sqrt{\frac{\tilde{\mathcal{I}}_{\pi_0}(s)}{\mathcal{I}_{\pi_0}(s)}} \cdot \underbrace{\underset{a\sim \pi_0}{\mathrm{Corr}}\left(A_{\pi_0}(s,a) \tilde{A}_{\pi_0}(s,a) \right)}_{\text{alignment}}.
\end{align*}
\end{proposition}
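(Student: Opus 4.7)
The plan is to expand $A_{\pi_0}(s,\tilde{\pi}_+)$ by writing $\tilde{\pi}_+$ explicitly via Bayes' rule, recognize the resulting expectation as a covariance under $\pi_0$, and then invoke Proposition~\ref{prop:triple_identity} to handle the denominator $A_{\pi_0}(s,\pi_+)$.

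First I would write out the success-conditioned proxy policy. Applying Bayes' rule exactly as in Section~\ref{sec:formulation} but to $\tilde{R}$, for any non-terminal state $s$,
\[
\tilde{\pi}_+(a\mid s) \;=\; \pi_0(a\mid s)\,\frac{\tilde{Q}_{\pi_0}(s,a)}{\tilde{V}_{\pi_0}(s)} \;=\; \pi_0(a\mid s)\left(1 + \frac{\tilde{A}_{\pi_0}(s,a)}{\tilde{V}_{\pi_0}(s)}\right).
\]
Substituting into the definition of $A_{\pi_0}(s,\tilde{\pi}_+) = \mathbb{E}_{a\sim \tilde{\pi}_+(\cdot\mid s)}[A_{\pi_0}(s,a)]$ gives
\[
A_{\pi_0}(s,\tilde{\pi}_+) \;=\; \mathbb{E}_{a\sim \pi_0}[A_{\pi_0}(s,a)] \;+\; \frac{1}{\tilde{V}_{\pi_0}(s)}\,\mathbb{E}_{a\sim\pi_0}\!\left[A_{\pi_0}(s,a)\,\tilde{A}_{\pi_0}(s,a)\right].
\]
The first term vanishes because advantages have zero mean under the policy they're defined against.

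Next I would recognize the remaining expectation as a covariance. Since $\mathbb{E}_{a\sim \pi_0}[A_{\pi_0}(s,a)] = \mathbb{E}_{a\sim \pi_0}[\tilde{A}_{\pi_0}(s,a)] = 0$, the expectation above is exactly $\mathrm{Cov}_{a\sim \pi_0}(A_{\pi_0}(s,a), \tilde{A}_{\pi_0}(s,a))$, which factors as
\[
\mathrm{Corr}_{a\sim\pi_0}\!\bigl(A_{\pi_0}(s,a),\tilde{A}_{\pi_0}(s,a)\bigr)\cdot \mathrm{Stdev}_{a\sim\pi_0}(A_{\pi_0}(s,a))\cdot \mathrm{Stdev}_{a\sim\pi_0}(\tilde{A}_{\pi_0}(s,a)).
\]
Since advantages differ from $Q$-values only by the (action-independent) value function, the standard deviations are unchanged when replacing $A$ with $Q$. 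The definition of action-influence then gives
\[
\mathrm{Stdev}_{a\sim\pi_0}(A_{\pi_0}(s,a)) \;=\; V_{\pi_0}(s)\sqrt{\mathcal{I}_{\pi_0}(s)},
\]
and similarly $\mathrm{Stdev}_{a\sim\pi_0}(\tilde{A}_{\pi_0}(s,a)) = \tilde{V}_{\pi_0}(s)\sqrt{\tilde{\mathcal{I}}_{\pi_0}(s)}$. Combining,
\[
A_{\pi_0}(s,\tilde{\pi}_+) \;=\; V_{\pi_0}(s)\,\sqrt{\mathcal{I}_{\pi_0}(s)\,\tilde{\mathcal{I}}_{\pi_0}(s)}\cdot \mathrm{Corr}_{a\sim\pi_0}\!\bigl(A_{\pi_0}(s,a),\tilde{A}_{\pi_0}(s,a)\bigr),
\]
where the $\tilde{V}_{\pi_0}(s)$ factors cancel.

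Finally I would apply Proposition~\ref{prop:triple_identity} to the denominator: $A_{\pi_0}(s,\pi_+) = V_{\pi_0}(s)\,\mathcal{I}_{\pi_0}(s)$. Dividing the two expressions makes $V_{\pi_0}(s)$ cancel, leaving $\sqrt{\tilde{\mathcal{I}}_{\pi_0}(s)/\mathcal{I}_{\pi_0}(s)}$ times the correlation, as claimed. There is no serious obstacle here, since every step is algebraic once the Bayes form of $\tilde{\pi}_+$ is written down; the only point that requires care is recognizing that the ordinary expectation $\mathbb{E}_{a\sim\pi_0}[A_{\pi_0}(s,a)\tilde{A}_{\pi_0}(s,a)]$ is a genuine covariance, which is precisely what permits the clean correlation-times-standard-deviations decomposition and, together with the triple identity, drives the clean cancellation of $V_{\pi_0}(s)$ and $\tilde{V}_{\pi_0}(s)$.
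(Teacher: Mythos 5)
Your proposal is correct and follows essentially the same route as the paper: expand $\tilde{\pi}_+$ via the Bayes-rule formula, identify $A_{\pi_0}(s,\tilde{\pi}_+)$ as $\mathrm{Cov}_{a\sim\pi_0}(A_{\pi_0},\tilde{A}_{\pi_0})/\tilde{V}_{\pi_0}(s)$, and normalize into a correlation times the ratio of coefficients of variation. The only cosmetic difference is that you cite Proposition~\ref{prop:triple_identity} for the denominator $A_{\pi_0}(s,\pi_+)=V_{\pi_0}(s)\,\mathcal{I}_{\pi_0}(s)$, whereas the paper recomputes it directly as $\mathrm{Var}_{a\sim\pi_0}(A_{\pi_0}(s,a))/V_{\pi_0}(s)$, which is the same quantity.
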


Compared to faithful conditioning, proxy conditioning helps if the relative increase in action-influence (coefficient of variation of the $Q$ function) is strong enough to overcome any loss in alignment.

\subsection{Insight into Return Thresholding}\label{sec:general_rewards_thresholding}
We use this framework to interpret return thresholding, where success is defined as $\tilde{R} = \mathbf{1}(Y > \theta)$. This approach is tightly linked to top-$k$\% trajectory filtering and reminiscent of Decision Transformers \citep{Chen2021DecisionTransformer}, which condition on return equaling a specified value. In all these approaches, improvement on the true objective depends on both how discriminating the proxy is (its coefficient of variation) and how well it aligns with the true reward (their correlation)---exactly the tradeoff Prop~\ref{prop:proxy_rewards} quantifies.

We illustrate this analysis using a stylized bandit example. There are 100 arms. Pulling arm $i$ yields reward $Y_i \sim \text{Beta}(a_i, b_i)$. Ninety-nine ``moderate'' arms have symmetric Beta distributions with mean $0.5$ but varying shapes ($a_i = b_i \sim \text{Uniform}(0.3, 0.7)$), yielding U-shaped distributions with mass at both extremes. One ``special'' arm has $\text{Beta}(18, 2)$, giving mean $0.9$ with mass concentrated near this value. The behavior policy is uniform.
\begin{figure}[H]
    \centering
    \includegraphics[width=0.8\columnwidth]{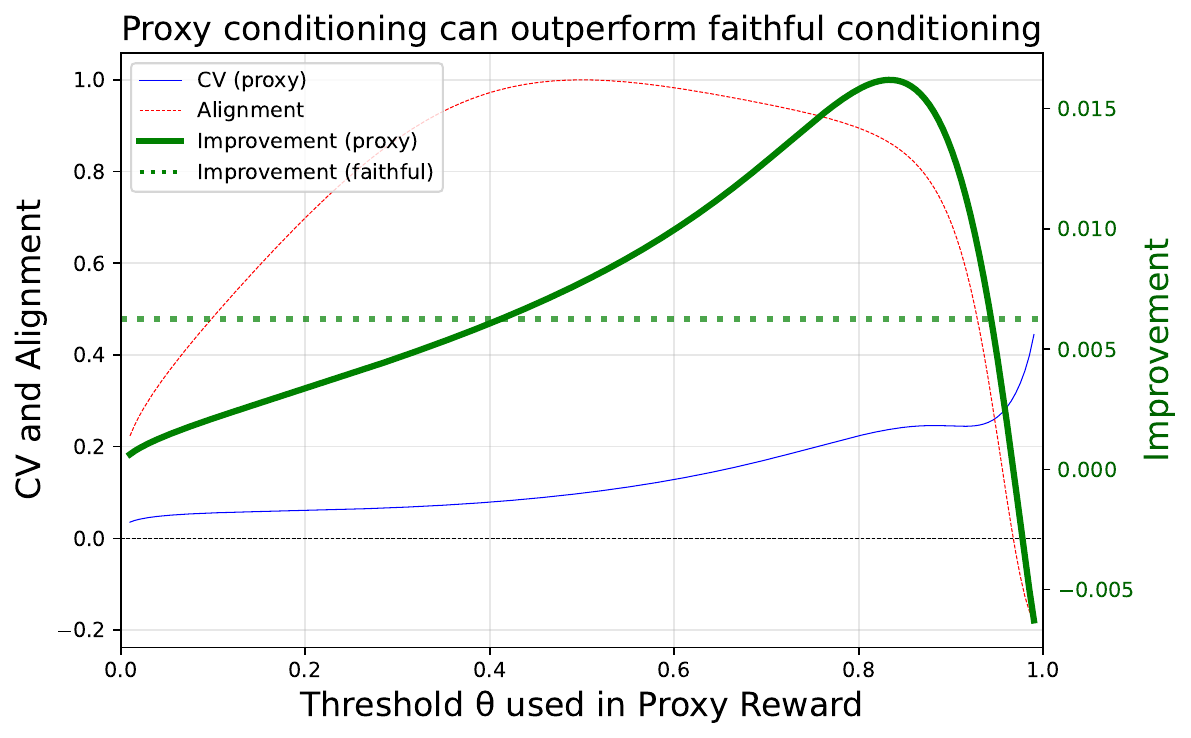}
    \caption{Alignment and improvement as a function of the success threshold used in the proxy reward.}
    \label{fig:reward_shaping}
\end{figure}
This stylized example distills both the benefits and risks of return thresholding. As shown in Figure~\ref{fig:reward_shaping}, there is a range of moderate-to-high thresholds where using the proxy reward increases performance on the true objective---substantially outperforming faithful conditioning. The proxy succeeds because it discriminates more sharply: the special arm reliably exceeds the threshold while the moderate arms mostly do not. Beyond a point, however, the method starts selecting for arms whose expected performance is poor but whose outcome distribution is volatile, causing occasional large outcomes that cross the threshold. When this happens, alignment degrades and improvement turns negative.

\section{Related Literature}\label{sec:literature}

\subsection{Return Conditioning in RL}
A growing body of work studies \emph{return- or success conditioned} policies, in which actions are generated conditional on a desired outcome—such as a target return or task completion—rather than via explicit policy optimization. Early examples include Upside-Down Reinforcement Learning \citep{Schmidhuber2019UDRL,Srivastava2019TrainingUDRL}. This perspective was popularized by Decision Transformers \cite{Chen2021DecisionTransformer} and related sequence-modeling approaches to offline RL \cite{Janner2021Trajectory}, and later extended to alternative generative models \cite{Ajay2023DecisionDiffuser} and to more general notions of success and goals, as in Reinforcement via Supervised Learning (RvS) \citep{Emmons2022RvS}.

Empirically, return-conditioned methods initially showed strong performance despite relying only on supervised learning, but subsequent work identified important limitations: optimal policy recovery requires strong assumptions \citep{Brandfonbrener2022WhenDoes}, and conditioning on very high returns may ``count on luck,'' emphasizing stochastic outcomes over reliable choices \citep{Paster2022Luck,Yang2023Dichotomy}. We show these failures arise from misalignment between the success criterion and the evaluation objective (Sec.~\ref{sec:extensions_general_rewards}); when aligned, success conditioning is a conservative improvement operator whose failure mode is over-conservatism when action-influence is small. This characterization is not provided by prior analyses; the closest is work of \cite{eysenbach2022imitating}, which notes the form of the success-conditioned policy and its improvement in single-goal settings, but does not characterize it as a policy improvement operator or analyze its failure modes.

\subsection{Exponential Policy Updates and Imitation Learning}

A large fraction of reinforcement learning algorithms update policies by \emph{exponentially reweighting} actions according to observed or estimated outcomes (e.g., returns, advantages, or Q-values) or preference-based comparisons, as in Direct Preference Optimization \citep{rafailov2023direct}. The resulting Gibbs policies are equivalent to KL- or entropy-regularized updates and arise across control-as-inference and variational formulations \cite{Peters2007RWR}, information-theoretic optimization \cite{Peters2010REPS,Haarnoja2018SAC}, EM-style methods \cite{Abdolmaleki2018MPO}, and conservative offline RL \cite{Peng2019AWR,Wang2020CRR,Kostrikov2021IQL}. Despite their diverse motivations, these methods share a common structure: scalar quality weights tilt the behavior distribution toward higher-valued actions.

A similarity with success conditioning is that these methods admit an \emph{imitation-learning interpretation}. Policies of the form $\pi(a\mid s)\propto \exp(\beta Q(s,a))$ arise as solutions to the weighted maximum-likelihood problem
\begin{align*}
    \max_{\pi}\; \mathbb{E}_{(s,a)\sim D}\!\left[\exp(\beta Q(s,a))\log \pi(a\mid s)\right].
\end{align*}
This objective has the same form as behavior cloning, but with the empirical distribution \emph{tilted} toward higher-valued actions \cite{Peters2007RWR,Peng2019AWR,Kostrikov2021IQL}. Success conditioning shares this supervised-learning form but differs in construction: its weights arise from exact conditioning on a terminal success event, involve no tunable temperature parameter $\beta$, and we show it induces a $\chi^2$ rather than KL geometry.

\subsection{Control as Inference}
Control-as-inference formulations introduce auxiliary ``optimality'' variables and perform Bayesian conditioning in an augmented graphical model. We refer to the tutorial and review of \cite{levine2018reinforcement}. In the exact formulation, conditioning on optimality yields a posterior over trajectories proportional to the dynamics probability times an exponential of accumulated reward. \citep[Section 2]{levine2018reinforcement} This construction is closely related in spirit to success conditioning, but corresponds to exponential tilting of trajectories rather than conditioning on a hard terminal success event.

To derive practical algorithms in stochastic environments, control-as-inference departs more substantially from success conditioning \citep[Section 3]{levine2018reinforcement}. Through variational approximations, the inference problem is converted into a KL-regularized control problem---effectively a regularized MDP---by fixing the dynamics and optimizing only the policy. The resulting objective is then addressed using dynamic programming or iterative policy improvement algorithms such as Soft Actor-Critic \citep{Haarnoja2018SAC}. In contrast, success conditioning does not define a new control problem to be solved by iterative policy search; it specifies a single policy update directly via Bayes’ rule, conditioning the behavior policy on observed success.

\section{Limitations and Open Directions}
Our analysis characterizes exact success conditioning under a fixed behavior policy, clarifying the target policy that common training procedures aim to approximate. While the results consist of exact equalities and do not rely on asymptotic arguments or empirical validation, they do not---except for Proposition~\ref{prop:offline}---address the effects of finite data, function approximation, or large-scale optimization, which we view as a distinct question that is important but likely sensitive to specific algorithmic choices.

An important open direction is to understand how the policy movement metric (Sec.~\ref{sec:triple_identity}) behaves in realistic training regimes, and when it serves as a useful diagnostic of improvement during large-scale success conditioning. On the theoretical side, further work could analyze iterated success conditioning and its convergence properties. Our results already imply that each iteration (weakly) improves the policy. The challenge is that success conditioning has no forced exploration, and progress could stall if it collapses on a degenerate policy. Future work could also look at mechanisms such as action chunking that may increase action-influence. %and mitigate conservatism.

\pagebreak

\section*{Impact statement} This paper presents work whose goal is to advance the field of Machine
Learning. There are many potential societal consequences of our work, none
which we feel must be specifically highlighted here.

\section*{Acknowledgments} Numerous conversions since submitting to ICML have greatly sharpened my understanding. Special thanks especially to the anonymous reviewers, Ian Osband, Ofir Nachum, Yuda Song, and Drew Bagnell  

 \bibliography{references}
\bibliographystyle{icml2026}

\appendix
\section{Estimating the Policy-Movement Diagnostic}
\label{sec:estimating_chi2}

By Proposition~\ref{prop:weighted_action_influenc}, the first-order
relative improvement equals the aggregate policy movement
\begin{equation*}
  \Gamma \;:=\; \sum_s d^{+}_{\pi_0}(s)\,\delta(s),
  \qquad
  \delta(s) := \chi^2\!\big(\pi_+(\cdot|s)\,\|\,\pi_0(\cdot|s)\big).
\end{equation*}
We observe that $\Gamma$ is the expectation of a simple per-trajectory
statistic, and hence is estimated by a sample average.

The success-conditioned occupancy is the expected state-visitation
count along successful trajectories,
$d^{+}_{\pi_0}(s)=\mathbb{E}\big[\sum_{t}\mathbf{1}(S_t=s)\big]$,
where throughout this section the expectation is over
$\tau\sim P_{\pi_0}(\cdot\mid R(\tau)=1)$. Since $\delta(s)$ depends
only on the state,
\begin{equation*}
  \Gamma \;=\; \mathbb{E}\!\left[\,\sum_{t=1}^{T-1}\delta(S_t)\right].
\end{equation*}
The per-state term is computed directly from the two policies,
\begin{equation*}
  \delta(s)
  \;=\;
  \sum_{a}\frac{\big(\pi_+(a|s)-\pi_0(a|s)\big)^2}{\pi_0(a|s)}.
\end{equation*}

Given a held-out set of $N$ successful trajectories
$\{\tau^{(i)}\}_{i=1}^{N}$ from $\pi_0$, the estimator
\begin{equation*}
  \widehat{\Gamma}
  \;=\;
  \frac{1}{N}\sum_{i=1}^{N}\sum_{t}\delta\!\big(S^{(i)}_t\big)
\end{equation*}
is an average of i.i.d.\ per-trajectory terms, and is therefore an unbiased and consistent estimate of $\Gamma$. Each term requires only the next-action distributions of $\pi_+$ and $\pi_0$ at the visited states, obtained from one forward pass per model. Note that $\delta(s)$ must be evaluated using the full action distributions: replacing it with the single realized action $A_t$ is biased, since along successful trajectories $A_t\sim\pi_+(\cdot|S_t)$ rather than $\pi_0(\cdot|S_t)$.

\section{Proofs}
\subsection{Expression for the Success-Conditioned Policy}
Here we state a formula for the success-conditioned policy. 
\begin{lemma}\label{lem:formula_for_SC_policy}
    For any state $s$ and action $a$, 
    \[
    \pi_{+}(s,a) = \pi_0(a|s)\left( 1+ \frac{A_{\pi_0}(s,a)}{V_{\pi_0}(s)}\right)
    \]
\end{lemma}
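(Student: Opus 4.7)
The plan is to derive the formula by a direct application of Bayes' rule in the probability space induced by running $\pi_0$ from the initial distribution, conditioning on the event $\{S_t = s\}$. Since we condition on a non-terminal state (our standing convention), we can write
\begin{align*}
\pi_+(a \mid s) &= P_{\pi_0}(A_t = a \mid S_t = s, R(\tau) = 1) \\
&= \frac{P_{\pi_0}(R(\tau) = 1 \mid S_t = s, A_t = a)\, P_{\pi_0}(A_t = a \mid S_t = s)}{P_{\pi_0}(R(\tau) = 1 \mid S_t = s)}.
\end{align*}

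Next I would identify each factor using the definitions in Section~\ref{sec:notation}. The numerator's first factor is $Q_{\pi_0}(s,a)$ by definition, the second is $\pi_0(a \mid s)$ because $\pi_0$ is Markovian (so the action distribution at time $t$ depends only on $S_t$), and the denominator is $V_{\pi_0}(s)$. This yields the intermediate form $\pi_+(a \mid s) = \pi_0(a \mid s) \, Q_{\pi_0}(s,a)/V_{\pi_0}(s)$.

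Finally, using $Q_{\pi_0}(s,a) = V_{\pi_0}(s) + A_{\pi_0}(s,a)$, I would split the ratio as $1 + A_{\pi_0}(s,a)/V_{\pi_0}(s)$ to obtain the claimed identity. The only subtlety, and what I would flag, is the division by $V_{\pi_0}(s)$: this is well-defined precisely because the paper's standing assumption guarantees $V_{\pi_0}(s) > 0$ at every non-terminal state (success is possible from every such state under $\pi_0$). Beyond that, there is no real obstacle; the lemma is essentially a restatement of the Bayesian definition of $\pi_+$ in terms of the advantage function.
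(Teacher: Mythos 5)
Your proposal is correct and follows essentially the same route as the paper: Bayes' rule conditioned on $\{S_t = s\}$, identification of the factors with $\pi_0(a\mid s)$, $Q_{\pi_0}(s,a)$, and $V_{\pi_0}(s)$, then the substitution $Q_{\pi_0}(s,a) = V_{\pi_0}(s) + A_{\pi_0}(s,a)$. Your explicit note that $V_{\pi_0}(s) > 0$ by the standing assumption is a welcome touch the paper leaves implicit.
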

\begin{proof}
    Applying Bayes rule, and the definitions in Section \ref{sec:notation} yields  
    \begin{align*}
    &\pi_{+}(s,a) \\
    =& \mathbb{P}_{\pi_0}\left( A_t=a \mid S_t=s, R(\tau)=1 \right) \\
    =& \frac{\mathbb{P}_{\pi_0}\left( A_t=a \wedge R(\tau)=1\mid S_t=s\right) }{\mathbb{P}_{\pi_0}\left( R(\tau)=1\mid S_t=s)\right)} \\
    =& \frac{\mathbb{P}_{\pi_0}\left( A_t=a\mid S_t=s\right)\mathbb{P}_{\pi_0}\left(R(\tau)=1\mid S_t=s, A_t=a\right) }{\mathbb{P}_{\pi_0}\left( R(\tau)=1\mid S_t=s\right)} \\
    =& \frac{\pi_0(a|s)Q_{\pi_0}(s,a)}{V_{\pi_0}(s)} \\
    =&\pi_0(a|s)\left( 1+ \frac{A_{\pi_0}(s,a)}{V_{\pi_0}(s)} \right),
    \end{align*} 
    where the final step uses that $Q_{\pi_0}(s,a) = V_{\pi_0}(s) + A_{\pi_0}(s,a)$.
\end{proof}

\subsection{Expression for the Success-Conditioned Occupancy Measure}
\begin{lemma}\label{lem:expression_for_the_sc_occupancy}
    The success-conditioned occupancy measure is related to the unconditioned occupancy measure as 
    \[
    d_{\pi}^+(s) =  \left(\frac{V_{\pi}(s)}{\rho(\pi)}\right) \cdot d_{\pi}(s)
    \]
    for any policy $\pi$ and state $s$.
\end{lemma}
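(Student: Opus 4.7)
The plan is to expand $d_\pi^+(s)$ as a sum over time of pointwise conditional probabilities, apply Bayes' rule to each, and let the Markov property produce the desired factor $V_\pi(s)/\rho(\pi)$ out front, with $d_\pi(s)$ left as the remaining sum.

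First, using the notational convention from Section~\ref{sec:notation} that $\{S_t = s\}$ for non-terminal $s$ implies the episode has not yet terminated, I would absorb the truncation at $T-1$ into the indicator and, by monotone convergence, rewrite
\[
d_\pi^+(s) = \sum_{t \ge 1} \mathbb{P}_\pi\bigl(S_t = s \mid R(\tau) = 1\bigr),
\qquad d_\pi(s) = \sum_{t \ge 1} \mathbb{P}_\pi(S_t = s).
\]
This reduces the lemma to matching the two sums term-by-term.

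Second, I would apply Bayes' rule to each conditional probability. Crucially, by the Markov property and the time-homogeneity of the transition kernel and policy from Section~\ref{sec:formulation}, the probability of eventual success from $s$ onward does not depend on the time index $t$ at which $s$ is visited, so $\mathbb{P}_\pi(R(\tau) = 1 \mid S_t = s) = V_\pi(s)$ uniformly in $t$. The denominator is $\rho(\pi)$ by definition. These two facts together let me pull the $t$-independent ratio $V_\pi(s)/\rho(\pi)$ outside the sum, and what remains is exactly $d_\pi(s)$.

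The only real subtlety is the uniform-in-$t$ identification $\mathbb{P}_\pi(R(\tau)=1 \mid S_t = s) = V_\pi(s)$; this is why the convention that $s$ is non-terminal (so conditioning on $S_t=s$ genuinely leaves a random continuation) matters. Once that is noted, the argument is a single application of Bayes' rule, and I anticipate no further obstacle.
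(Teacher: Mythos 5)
Your proposal is correct and is essentially the paper's own argument: the paper multiplies by $\rho(\pi)$ and uses the tower property to write $\mathbb{E}_\pi[1(S_t=s)\,1(R(\tau)=1)] = \mathbb{E}_\pi[1(S_t=s)\,V_\pi(s)]$, which is exactly your term-by-term Bayes' rule step $\mathbb{P}_\pi(S_t=s \mid R(\tau)=1) = V_\pi(s)\,\mathbb{P}_\pi(S_t=s)/\rho(\pi)$. Your attention to the $t$-uniformity of $\mathbb{P}_\pi(R(\tau)=1\mid S_t=s)=V_\pi(s)$ and to absorbing the truncation at $T-1$ via the non-terminality convention is consistent with (indeed slightly more careful than) the paper's treatment.
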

\begin{proof}
    \begin{align*}
       \rho(\pi) \cdot  d_{\pi}^+(s)  
       &= \rho(\pi) \mathbb{E}_{\pi}\left[ \sum_{t=1}^{T-1} 1(S_t=s) \mid R(\tau)=1 \right] \\
       &=  \mathbb{E}_{\pi}\left[ \sum_{t=1}^{T-1} 1(S_t=s) \cdot 1(R(\tau)=1) \right] \\
       &= \mathbb{E}_{\pi}\left[\sum_{t=1}^{T-1}\mathbb{E}_{\pi} \left[  1(S_t=s) \cdot 1(R(\tau)=1) \mid S_t\right] \right] \\
       & = \mathbb{E}_{\pi}\left[ \sum_{t=1}^{T-1} 1(S_t=s) \cdot \mathbb{P}(R(\tau)=1 \mid S_t) \right]\\
       & =  \mathbb{E}_{\pi}\left[ \sum_{t=1}^{T-1} 1(S_t=s) \cdot  V_{\pi}(S_t) \right] \\
       & =\mathbb{E}_{\pi}\left[ \sum_{t=1}^{T-1} 1(S_t=s) \cdot  V_{\pi}(s) \right]\\
       & = V_{\pi}(s) d_{\pi}(s).
    \end{align*}
\end{proof}

\subsection{Proof of Proposition \ref{prop:SC_trust_region}}
\begin{lemma}\label{lem:trust_region_RV_form}
    Define the normalized state distribution
\begin{equation}
h(s)
:=
\frac{d^+_{\pi_0}(s)}{\sum_{s'} d^+_{\pi_0}(s')}.
\label{eq:prop41-h-def}
\end{equation}
and let $(S,A)$ be sampled according to
\begin{equation}
S \sim h,
\qquad
A \mid S \sim \pi_0(\cdot|s).
\label{eq:prop41-sampling}
\end{equation}
Then, the success conditioned optimization problem, is equivalent to 
\begin{align}\label{eq:rewritten_trust_region_obj}
    \max_{\pi\in \Pi} & \,\, \mathbb{E}\left[
\left(\frac{\pi(A\mid S)}{\pi_0(A\mid S)}-1\right)
\,\ \times \,\ \frac{A_{\pi_0}(S,A)}{V_{\pi_0}(S)}
\right] \\\label{eq:rewritten_trust_region_constraint}
\text{s.t} & \,\, \mathbb{E}\!\left[\Big( \frac{\pi(A\mid S)}{\pi_0(A\mid S)}-1\Big)^2\right]
\le
\mathbb{E}\left[\left( \frac{A_{\pi_0}(S,A)}{V_{\pi_0}(S)}\right)^2\right]
\end{align}
where $\Pi$ denotes the set of all stochastic policies. 
\end{lemma}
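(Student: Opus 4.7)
The plan is to translate the trust-region problem of Proposition~\ref{prop:SC_trust_region} term-by-term into expectations under the sampling scheme \eqref{eq:prop41-sampling}, using two ingredients: the relation between $d^+_{\pi_0}$ and $d_{\pi_0}$ from Lemma~\ref{lem:expression_for_the_sc_occupancy}, and the zero-mean property $\mathbb{E}_{a\sim\pi_0(\cdot|s)}[A_{\pi_0}(s,a)]=0$.

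First I would handle the objective. Expanding $L_{\pi_0}(\pi)=\sum_s d_{\pi_0}(s) A_{\pi_0}(s,\pi)$ and substituting $d_{\pi_0}(s)=\frac{\rho(\pi_0)}{V_{\pi_0}(s)}d^+_{\pi_0}(s)$ from Lemma~\ref{lem:expression_for_the_sc_occupancy} puts the $V_{\pi_0}(s)$ in the denominator where we want it. Next I would introduce importance weights by writing $A_{\pi_0}(s,\pi)=\sum_a \pi_0(a|s)\frac{\pi(a|s)}{\pi_0(a|s)}A_{\pi_0}(s,a)$, and then use that $\sum_a \pi_0(a|s)A_{\pi_0}(s,a)=0$ to replace the ratio $\pi/\pi_0$ by the centered ratio $\pi/\pi_0-1$ without changing the value. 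After dividing through by the normalization constant $C:=\sum_{s'} d^+_{\pi_0}(s')$, the objective becomes $\rho(\pi_0)\cdot C$ times the expectation in \eqref{eq:rewritten_trust_region_obj}. Since $\rho(\pi_0)\cdot C$ is a positive constant independent of $\pi$, the argmax is preserved.

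Next I would handle the constraint. By Definition~\ref{def:chi-squared}, $\chi^2(\pi(\cdot|s)\|\pi_0(\cdot|s))=\mathbb{E}_{a\sim\pi_0(\cdot|s)}\bigl[(\pi(a|s)/\pi_0(a|s)-1)^2\bigr]$, so the left-hand side of the constraint equals $C$ times the expectation on the left of \eqref{eq:rewritten_trust_region_constraint}. For the right-hand side I would invoke the alternate form of action-influence stated in the remark following its definition, $\mathcal{I}_{\pi_0}(s)=\mathbb{E}_{a\sim\pi_0(\cdot|s)}\bigl[(A_{\pi_0}(s,a)/V_{\pi_0}(s))^2\bigr]$, which immediately rewrites $\sum_s d^+_{\pi_0}(s)\mathcal{I}_{\pi_0}(s)$ as $C$ times the right-hand side expectation. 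Dividing both sides of the constraint by $C$ produces \eqref{eq:rewritten_trust_region_constraint}.

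The manipulations are essentially a change of measure, and the only non-routine step is the one I would flag as the main (small) subtlety: justifying the replacement of $\pi/\pi_0$ by $\pi/\pi_0-1$ in the objective via the centering identity, and doing so in a way that is not sensitive to whether individual terms are absolutely summable---this is where carrying the sum explicitly before switching to expectation notation is cleanest. Once that is in place, the lemma follows by assembling the rewritten objective and constraint and noting that positive rescaling of the objective and both sides of a single inequality constraint leaves the feasible set and optimizers unchanged, so $\pi_+$ solves the original problem if and only if it solves \eqref{eq:rewritten_trust_region_obj}--\eqref{eq:rewritten_trust_region_constraint}.
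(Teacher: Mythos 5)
Your proposal is correct and follows essentially the same route as the paper's proof: both substitute $d_{\pi_0}(s)=\tfrac{\rho(\pi_0)}{V_{\pi_0}(s)}d^+_{\pi_0}(s)$ via Lemma~\ref{lem:expression_for_the_sc_occupancy}, perform the change of measure to the importance ratio, use the zero-mean advantage to center the ratio as $\pi/\pi_0-1$, and rewrite both sides of the constraint via the variational form of $\chi^2$ and the alternate (squared relative advantage) form of $\mathcal{I}_{\pi_0}$. The only difference is cosmetic bookkeeping of the positive normalization constants, which you handle correctly.
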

\begin{proof}
    By Lemma~\ref{lem:expression_for_the_sc_occupancy}, the linearized improvement objective can be written as
\begin{align*}
L_{\pi_0}(\pi)
= \rho(\pi_0)
\sum_s d^+_{\pi_0}(s)\,
\mathbb{E}_{a\sim\pi(\cdot\mid s)}
\!\left[
\frac{A_{\pi_0}(s,a)}{V_{\pi_0}(s)}
\right].
\end{align*}
Using a change of measure and dropping constants that depend on $\pi_0$ only, this becomes 
\begin{equation}
L_{\pi_0}(\pi)
\propto
\mathbb{E}
\!\left[
\frac{\pi(A\mid S)}{\pi_0(A\mid S)}
\,\ \cdot \,\ \frac{A_{\pi_0}(S,A)}{V_{\pi_0}(S)}
\right].
\label{eq:prop41-change-measure}
\end{equation}
By the definition of the advantage, $\mathbb{E}\left[A_{\pi_0}(S,A) \mid S\right] = 0$, so we can equivalently write the objective as
\begin{equation}
L_{\pi_0}(\pi)
\propto
\mathbb{E}
\!\left[
\left(\frac{\pi(A\mid S)}{\pi_0(A\mid S)}-1\right)
\,\ \cdot \,\ \frac{A_{\pi_0}(S,A)}{V_{\pi_0}(S)}
\right].
\label{eq:prop41-centered-objective}
\end{equation}

 We now rewrite the trust-region constraint.
Using the standard identity
\begin{equation}
\chi^2(P\|Q)
=
\mathbb{E}_{x\sim Q}
\!\left[
\Big(
\frac{P(x)}{Q(x)}
-1
\Big)^2
\right],
\label{eq:chi2-identity}
\end{equation}
the constraint
\begin{equation}
\sum_s d^+_{\pi_0}(s)\,
\chi^2\!\left(
\pi(\cdot\mid s)\,\|\,\pi_0(\cdot\mid s)
\right)
\le
\sum_s d^+_{\pi_0}(s)\, \mathcal{I}_{\pi_0}(s)
\label{eq:prop41-constraint}
\end{equation}
is equivalent (up to the same normalization constant) to
\begin{equation*}
\mathbb{E}
\!\left[
\Big(
\frac{\pi(A\mid S)}{\pi_0(A\mid S)}
-1
\Big)^2
\right]
\le
\mathbb{E}\left[\left( \frac{A_{\pi_0}(S,A)}{V_{\pi_0}(S)}\right)^2\right].
\label{eq:prop41-expectation-constraint}
\end{equation*}
To rewrite the expected action-advantage, $\mathbb{E}\left[\mathcal{I}_{\pi_0}(S)\right]$, we applied the definition
$\mathcal{I}_{\pi_0}(S) = \mathbb{E}\left[\frac{A_{\pi_0}(S,A)}{V_{\pi_0}(S)} \mid S\right]$ and the law of iterated expectations. 
\end{proof}

 \begin{proof}[Proof of Proposition~\ref{prop:SC_trust_region}]
Fix $\pi_0$.
Let $h$ and $(S,A)$ be sampled as in Lemma~\ref{lem:trust_region_RV_form}:

For any feasible $\pi$, Cauchy--Schwarz gives
\begin{align} \nonumber
&\mathbb{E}
\!\left[
\Big(
\frac{\pi(A\mid S)}{\pi_0(A\mid S)}-1
\Big)
\frac{A_{\pi_0}(S,A)}{V_{\pi_0}(S)}
\right] \\
\le&
\sqrt{
\mathbb{E}
\!\left[
\Big(
\frac{\pi(A\mid S)}{\pi_0(A\mid S)}-1
\Big)^2
\right]
}
\hspace{1.2em}\times
\sqrt{
\mathbb{E}
\!\left[
\Big(
\frac{A_{\pi_0}(S,A)}{V_{\pi_0}(S)}
\Big)^2
\right]
}
\nonumber\\
\le&
\mathbb{E}
\!\left[
\Big(
\frac{A_{\pi_0}(S,A)}{V_{\pi_0}(S)}
\Big)^2
\right].
\label{eq:prop41-upper}
\end{align}

By Lemma~\ref{lem:formula_for_SC_policy},
\[
\pi_+(a\mid s) = \pi_0(a\mid s)\Big(1+\frac{A_{\pi_0}(s,a)}{V_{\pi_0}(s)}\Big),
\]
which implies
\begin{equation}
\frac{\pi_+(A\mid S)}{\pi_0(A\mid S)}-1
=
\frac{A_{\pi_0}(S,A)}{V_{\pi_0}(S)}.
\label{eq:prop41-tight}
\end{equation}

Substituting \eqref{eq:prop41-tight} into
\eqref{eq:prop41-constraint} shows the rewritten feasibility constraint \eqref{eq:rewritten_trust_region_constraint} holds with
equality. Substituting into \eqref{eq:rewritten_trust_region_obj} shows $\pi_+$
achieves the upper bound in \eqref{eq:prop41-upper}.
Therefore $\pi_+$ is optimal for
\eqref{eq:rewritten_trust_region_obj}--\eqref{eq:rewritten_trust_region_constraint},
and hence for the original trust-region problem in
Proposition~\ref{prop:SC_trust_region}.
\end{proof}

\subsection{Proof of Proposition \ref{prop:triple_identity}}
\begin{proof} 
Using the formula for the success-conditioned policy $\pi_+(a|s)$ in Lemma~\ref{lem:formula_for_SC_policy} yields:
    \begin{align*}
        &\mathbb{E}_{a\sim \pi_+(\cdot|s)}\left[A_{\pi_0}(s,a)\right] \\
        =& \sum_{a} \pi_{+}(a|s)A_{\pi_0}(s,a) \\
        =&  \sum_{a}\left[\pi_0(a|s)\left( 1+ \frac{A_{\pi_0}(s,a)}{V_{\pi_0}(s)} \right)\right] A_{\pi_0}(s,a)  \\
        =&  \sum_{a}\pi_0(a|s)A_{\pi_0}(s,a) + \frac{\sum_{a}\pi_0(a|s) \left(A_{\pi_0}(s,a)\right)^2}{V_{\pi_0}(s)}.\\
        =& 0 + \frac{\sum_{a}\pi_0(a|s) \left(A_{\pi_0}(s,a)\right)^2}{V_{\pi_0}(s)},
    \end{align*}
    where the final equality uses that the average advantage of actions drawn from $\pi_0$ over $\pi_0$ is zero. Dividing by $V_{\pi_0}(s)$ yields, 
    \begin{align*}
    \frac{\mathbb{E}_{a\sim \pi_+(\cdot|s)}\left[A_{\pi_0}(s,a)\right]}{V_{\pi_0}(s)} &= \frac{\sum_{a}\pi_0(a|s) \left(A_{\pi_0}(s,a)\right)^2}{\left(V_{\pi_0}(s)\right)^2} \\
    &= \sum_{a}\pi_0(a|s) \left(\frac{A_{\pi_0}(s,a)}{V_{\pi_0}(s)}\right)^2 \\
    &= \mathcal{I}_{\pi_0}(s). 
    \end{align*}
        
    Now we show 
    \begin{equation}
     \chi^2\left(\pi_{+}(\cdot|s) || \pi_{0}(\cdot|s) \right) 
    =  \mathcal{I}_{\pi_0}(s)  
    \end{equation}
    Using Definition~\ref{def:chi-squared}, 
    \begin{align*}
    \chi^2\left(\pi_{+}(\cdot|s) || \pi_{0}(\cdot|s) \right) &= {\rm Var}_{a\sim \pi_{0}(\cdot|s)} \left( \frac{\pi_{+}(a|s)}{\pi_0(a|s)} \right).\\
    &={\rm Var}_{a\sim \pi_{0}(\cdot|s)} \left( 1+ \frac{A_{\pi_{0}}(s,a)}{ V_{\pi_0}(s)} \right)\\
    &={\rm Var}_{a\sim \pi_{0}(\cdot|s)} \left( \frac{A_{\pi_{0}}(s,a)}{ V_{\pi_0}(s)} \right)\\
    &=\mathbb{E}_{a\sim \pi_{0}(\cdot|s)} \left( \frac{A_{\pi_{0}}(s,a)}{ V_{\pi_0}(s)} \right)^2.
    \end{align*}
    The last step uses that for any random variable $X$, $\mathbb{E}[X^2] = {\rm Var}(X) + \left(\mathbb{E}\left[X\right]\right)^2$. In our case, the random variable ($X$) has mean zero, since $\mathbb{E}_{a\sim \pi_{0}(\cdot|s)} \left[A_{\pi_0}(s,a)\right] =\mathbb{E}_{a\sim \pi_{0}( \cdot|s)}\left[Q_{\pi_{0}}(s,a)\right]-V_{\pi_0}(s) = 0$   
    %$ {\rm Var}_{x\sim Q}\left( \frac{P(x)}{Q(x)} \right)$
\end{proof}

\subsection{Proof of Proposition \ref{prop:weighted_action_influenc}}

\begin{proof}
    We begin by recalling the definition
    \[
    A_{\pi_0}(s,\pi_+) \;:=\; \mathbb{E}_{a \sim \pi_+(\cdot \mid s)}\!\left[ A_{\pi_0}(s,a) \right].
    \]
    By Proposition~\ref{prop:triple_identity}, we have
    \[
    \frac{A_{\pi_0}(s,\pi_+)}{V_{\pi_0}(s)}
    \;=\;
    \mathcal{I}_{\pi_0}(s)
    \;=\;
    \chi^2\!\left(\pi_+(\cdot \mid s)\,\|\,\pi_0(\cdot \mid s)\right),
    \]
    and hence
    \[
    A_{\pi_0}(s,\pi_+)
    \;=\;
    V_{\pi_0}(s)\,\mathcal{I}_{\pi_0}(s).
    \]
        Multiplying by the (unconditioned) occupancy measure $d_{\pi_0}(s)$ and summing over $s$ yields
    \begin{align*}
    L_{\pi_0}(\pi_+) &= \sum_{s} d_{\pi_0}(s) V_{\pi_0}(s) \mathcal{I}_{\pi_0}(s) \\
    &=\rho(\pi_0)\sum_{s} d^+_{\pi_0}(s)\mathcal{I}_{\pi_0}(s). 
    \end{align*}
    The first equality is a definition of $L_{\pi_0}(\pi_+)$. The second is the formula for the success-conditioned occupancy measure in Lemma~\ref{lem:expression_for_the_sc_occupancy}. Dividing each side by the success-probability under the behavior policy, $\rho(\pi_0)$, yields the result.  We've focused in the $\mathcal{I}_{\pi_0}$ expression, since an identical argument establishes the claim for the  $\chi^2$ expression. 
\end{proof}

\subsection{Proof of Corollary \ref{cor:monotonicity}}
This proof uses standard dynamic programming theory and is provided for completeness. 
\begin{proof}
    Let $T_{\pi}$ be the Bellman operator with respect to a policy $\pi$. This operator maps a value function $V$ to a new value function $T_{\pi}V$. For a value function which is itself the expected value-to-go from another policy $\pi'$, it can be written as
    \[
    (T_{\pi} V_{\pi'})(s) = \sum_{a} \pi(a|s) Q_{\pi'}(s,a) = V_{\pi'}(s)+A_{\pi'}(s,\pi).   
    \]
    Hence, our theory has shown that 
    \[
    \left(T_{\pi_+} V_{\pi_0}\right) (s) \geq V_{\pi_0}(s) \quad \forall s.
    \]
    We write this element-wise inequality as  
    \[
    T_{\pi_+} V_{\pi_0} \succeq  V_{\pi_0}.
    \]
    The monotonicity property of the Bellman operator  \cite{bertsekas2011dynamic} implies that this inequality is preserved under application of $T_{\pi}$, so $T_{\pi_+}^{(2)} V_{\pi_0}  \succeq T_{\pi_+}V_{\pi_0}$. Repeating this inductively yields 
    \[
     V_{\pi_0} \preceq T_{\pi_+}V_{\pi_0} \preceq \cdots \preceq T_{\pi_+}^{(k)} V_{\pi_0} \to V_{\pi_+}.
    \]
    Therefore, we've shown 
    \[V_{\pi_+}(s) \geq T_{\pi_+}V_{\pi_0}(s) = V_{\pi_0}(s)+A_{\pi_0}(s,\pi_+)\]
    and our result follows. The claim that $\rho(\pi_+)\geq \rho(\pi_0)$ is immediate since $\rho(\pi)=\sum_s \mu(s) V_{\pi}(s)$ for any policy $\pi$. (Recall $\mu$ is the initial state distribution). 
    
\end{proof}

 \subsection{Proof of Lemma \ref{lem:rare-action-tolerance}}
\begin{proof}
Define
\[
p^\star_{\chi^2}(\delta)
:= \sup\Big\{\pi(a^\star):\ 
\chi^2(\pi\|\pi_0)\le \varepsilon\Big\},
\]
\[
p^\star_{\mathrm{KL}}(\delta)
:= \sup\Big\{\pi(a^\star):\ 
D_{\mathrm{KL}}(\pi\|\pi_0)\le \varepsilon\Big\}.
\]

\textbf{Part 1: $\chi^2$ gives $p^\star_{\chi^2}(\delta)=\Theta(\sqrt{\delta})$.}

\emph{Upper bound.}
For any $\pi$ with $\pi(a^\star)=p$,
\begin{align*}
\chi^2(\pi\|\pi_0)
=\sum_{a}\frac{(\pi(a)-\pi_0(a))^2}{\pi_0(a)}
\;&\ge\;
\frac{(\pi(a^\star)-\pi_0(a^\star))^2}{\pi_0(a^\star)} \\
&=
\frac{(p-\delta)^2}{\delta}.
\end{align*}
Thus $\chi^2(\pi\|\pi_0)\le \varepsilon$ implies $(p-\delta)^2\le \varepsilon\delta$, i.e.
\[
p \le \delta+\sqrt{\varepsilon\,\delta}
= O(\sqrt{\delta})\qquad(\delta\to 0).
\]

\emph{Lower bound.}
Consider the policy that changes only the mass on $a^\star$ and keeps the remaining
mass uniform:
\[
\pi(a^\star)=p,\qquad \pi(a_i)=\frac{1-p}{k},\ \ i=1,\dots,k.
\]
Let $p=\delta+c\sqrt{\delta}$ for a constant $c>0$ (and $\delta$ small enough so that $p\le 1$).
Then the contribution of $a^\star$ to $\chi^2(\pi\|\pi_0)$ is
\[
\frac{(p-\delta)^2}{\pi_0(a^\star)}=\frac{c^2\delta}{\delta}=c^2.
\]
Each common action satisfies $\pi_0(a_i)=(1-\delta)/k$ and
\[
\pi(a_i)-\pi_0(a_i)=\frac{1-p}{k}-\frac{1-\delta}{k}=-\frac{p-\delta}{k}=-\frac{c\sqrt{\delta}}{k},
\]
so the total contribution of the $k$ common actions is
\[
\sum_{i=1}^k \frac{(\pi(a_i)-\pi_0(a_i))^2}{\pi_0(a_i)}
= k \cdot \frac{(c^2\delta/k^2)}{(1-\delta)/k}
= \frac{c^2\delta}{1-\delta}
= o(1).
\]
Hence $\chi^2(\pi\|\pi_0)=c^2+o(1)$ as $\delta\to 0$. Choosing $c<\sqrt{\varepsilon}$
makes $\chi^2(\pi\|\pi_0)\le \varepsilon$ for all sufficiently small $\delta$, and thus
\[
p^\star_{\chi^2}(\delta)\ge \delta + c\sqrt{\delta}=\Omega(\sqrt{\delta}).
\]
Combining upper and lower bounds yields $p^\star_{\chi^2}(\delta)=\Theta(\sqrt{\delta})$.

\medskip
\textbf{Part 2: forward KL gives $p^\star_{\mathrm{KL}}(\delta)
=\Theta(1/\log(1/\delta))$.}

\emph{Upper bound.}
We consider the forward KL divergence
\[
D_{\mathrm{KL}}(\pi\|\pi_0)=\sum_a \pi(a)\log\frac{\pi(a)}{\pi_0(a)}.
\]
Since the objective depends only on the probability mass $p=\pi(a^\star)$
assigned to the rare action, we may group all remaining actions into a
single outcome. By the grouping (data-processing) property of $f$-divergences,
the problem reduces to a two-point (Bernoulli) divergence with
\[
\pi=(p,1-p),\qquad \pi_0=(\delta,1-\delta).
\]
Thus
\[
D_{\mathrm{KL}}(\pi\|\pi_0)
= p\log\frac{p}{\delta} + (1-p)\log\frac{1-p}{1-\delta}.
\]

If $p$ were bounded away from zero, then the first term
$p\log(p/\delta)$ would diverge as $\delta\to 0$, violating the constraint
$D_{\mathrm{KL}}(\pi\|\pi_0)\le\varepsilon$. Hence any feasible sequence
satisfies $p\to 0$ as $\delta\to 0$.

In this regime, the second term satisfies
\[
(1-p)\log\frac{1-p}{1-\delta}=o(1),
\]
while
\[
p\log\frac{p}{\delta}
= p\log\frac{1}{\delta} + p\log p.
\]
Since $p\to 0$, we have $p\log p = -p\log(1/p)=o\!\left(p\log(1/\delta)\right)$.
Therefore,
\[
D_{\mathrm{KL}}(\pi\|\pi_0)
= p\log\frac{1}{\delta}\bigl(1+o(1)\bigr).
\]

It follows that if $D_{\mathrm{KL}}(\pi\|\pi_0)\le\varepsilon$, then
\[
p \le \frac{\varepsilon}{\log(1/\delta)}\bigl(1+o(1)\bigr),
\]
and hence $p^\star_{\mathrm{KL}}(\delta)
=O\!\left(1/\log(1/\delta)\right)$.

\emph{Lower bound.}
Again consider the two-point distribution with $p=c/\log(1/\delta)$ for a
constant $c>0$. Substituting into the expression above yields
\[
p\log\frac{p}{\delta}
= c + o(1),
\qquad
(1-p)\log\frac{1-p}{1-\delta}=o(1),
\]
and hence
\[
D_{\mathrm{KL}}(\pi\|\pi_0)=c+o(1).
\]
Choosing $c<\varepsilon$ ensures that the KL constraint is satisfied for all
sufficiently small $\delta$, giving
\[
p^\star_{\mathrm{KL}}(\delta)
=\Omega\!\left(1/\log(1/\delta)\right).
\]

Combining upper and lower bounds yields
\[
p^\star_{\mathrm{KL}}(\delta)=\Theta\!\left(\frac{1}{\log(1/\delta)}\right).
\]
\end{proof}

\subsection{Proof of Proposition~\ref{prop:offline}}
\begin{proof}
Recall
\[
M^{-1} \;:=\; 1/ \sup_{s} \frac{d_{\pi^+}(s)}{d^+_{\pi_0}(s)} = \inf_{s} \frac{d^+_{\pi_0}(s)}{d_{\pi^+}(s)}.
\]
Recall also that $\Delta = \ell(\hat{\pi})-\ell(\pi_+)$. We write,
\begin{align*}
&\ell(\hat{\pi}) - \ell(\pi_+) \\
=& \mathbb{E}_{\pi_0}\left[-\sum_{t=1}^{T-1} \log \hat{\pi}(A_t | S_t) + \sum_{t=1}^{T} \log \pi_+(A_t | S_t) \mid R(\tau)=1\right] \\
=& \mathbb{E}_{\pi_0}\left[\sum_{t=1}^{T-1} \log \left(\frac{\pi_+(A_t | S_t) }{\hat{\pi}(A_t | S_t)}\right) \mid R(\tau)=1\right] \\
=& \mathbb{E}_{\pi_0}\left[\sum_{t=1}^{T-1} \mathbb{E}_{\pi_0}\left[ \log \left(\frac{\pi_+(A_t | S_t) }{\hat{\pi}(A_t | S_t)}\right) \mid S_t, R(\tau)=1 \right] \mid R(\tau)=1\right]\\ 
\overset{(a)}{=}& \mathbb{E}_{\pi_0}\left[\sum_{t=1}^{T-1} \left( \sum_{a} \pi_{+}(a| S_t)\log \left(\frac{\pi_+(a | S_t) }{\hat{\pi}(A_t | S_t)}\right) \right) \mid R(\tau)=1\right]\\
=& \mathbb{E}_{\pi_0}\left[\sum_{t=1}^{T-1} D_{\rm KL}\left( \pi_{+}(\cdot |S_t) \,||\, \hat{\pi}(\cdot | S_t)\right)\mid R(\tau)=1\right]\\
\overset{(b)}{=}& \sum_{s} d_{\pi_0}^{+}(s) D_{\rm KL}\left( \pi_{+}(\cdot |s) \,||\, \hat{\pi}(\cdot | s)\right)\\
\geq& M^{-1} \sum_{s} d_{\pi_+}(s) D_{\rm KL}\left( \pi_{+}(\cdot |s) \,||\, \hat{\pi}(\cdot | s)\right) \\
=& M^{-1} \mathbb{E}_{\pi_+}\left[ \sum_{t=1}^{T-1} D_{\rm KL}\left( \pi_{+}(\cdot |S_t) \,||\, \hat{\pi}(\cdot | S_t)\right)\right] \\
\overset{(c)}{=}&  M^{-1}  D_{\rm KL}\left( P_{\pi_+}( \tau) \,||\, P_{\hat{\pi}}( \tau) \right)
\end{align*}
Equality(a) is a key step, and uses that conditioned on the state, and on the trajectory ending successfully, by definition the action-distribution in that prescribed by $\pi_+$. (See the definition in Sec.~\ref{sec:formulation}). 
Equality (b) is the definiion of the occupancy measure given in Sec.~\ref{sec:notation}. Finally, equality (c) uses the chain rule of KL divergence.

By Pinsker's inequality,
\[
\|P_{\pi_+}(\tau) - P_{\hat{\pi}}(\tau)\|_{TV} \leq \sqrt{\frac{1}{2} D_{KL}(P_{\pi_+} \| P_{\hat{\pi}})} = \sqrt{M\Delta/2}.
\]
Since $\rho(\pi) = P_\pi(R(\tau) = 1)$ is the probability of a measurable event under the trajectory distribution,
\[
|\rho(\hat{\pi}) - \rho(\pi_+)| \leq \|P_{\pi_+}(\tau) - P_{\hat{\pi}}(\tau)\|_{TV} \leq \sqrt{M\Delta/2}.
\]
\end{proof}

\subsection{Proof of Proposition \ref{prop:proxy_rewards}}
\begin{proof}
Fix a non-terminal state $s$. Define the (faithful and proxy) advantages
\begin{align*}
&A_{\pi_0}(s,a) := Q_{\pi_0}(s,a) - V_{\pi_0}(s),\\
&\widetilde A_{\pi_0}(s,a) := \widetilde Q_{\pi_0}(s,a) - \widetilde V_{\pi_0}(s),
\end{align*}
so that $\sum_a \pi_0(a | s)A_{\pi_0}(s,a)=0$ and $\sum_a \pi_0(a|s)\widetilde A_{\pi_0}(s,a)=0$.

By the analogue of Lemma~\ref{lem:formula_for_SC_policy} applied to the faithful reward and the proxy reward respectively, the corresponding success-conditioned policies satisfy
\[
\pi_+(a\mid s)=\pi_0(a\mid s)\Bigl(1+\frac{A_{\pi_0}(s,a)}{V_{\pi_0}(s)}\Bigr),
\]
and
\[
\widetilde\pi_+(a\mid s)=\pi_0(a\mid s)\Bigl(1+\frac{\widetilde A_{\pi_0}(s,a)}{\widetilde V_{\pi_0}(s)}\Bigr).
\]

We first compute the advantage of $\widetilde\pi_+$ under the faithful reward:
\begin{align*}
A_{\pi_0}(s,\widetilde\pi_+)
&:= \mathbb{E}_{a\sim \widetilde\pi_+(\cdot| s)}\!\left[A_{\pi_0}(s,a)\right]\\
&= \sum_a \widetilde\pi_+(a| s)\,A_{\pi_0}(s,a) \\
&= \sum_a \pi_0(a|s)\Bigl(1+\frac{\widetilde A_{\pi_0}(s,a)}{\widetilde V_{\pi_0}(s)}\Bigr)A_{\pi_0}(s,a) \\
&= \frac{1}{\widetilde V_{\pi_0}(s)}\sum_a \pi_0(a| s)\,\widetilde A_{\pi_0}(s,a)A_{\pi_0}(s,a) \\
&= \frac{\mathrm{Cov}_{a\sim\pi_0(\cdot| s)}\!\left(A_{\pi_0}(s,a),\widetilde A_{\pi_0}(s,a)\right)}{\widetilde V_{\pi_0}(s)}.
\end{align*}
Similarly,
\begin{align*}
A_{\pi_0}(s,\pi_+)
&:= \mathbb{E}_{a\sim \pi_+(\cdot|s)}\!\left[A_{\pi_0}(s,a)\right]\\
&= \sum_a \pi_+(a| s)\,A_{\pi_0}(s,a) \\
&= \sum_a \pi_0(a|s)\Bigl(1+\frac{A_{\pi_0}(s,a)}{V_{\pi_0}(s)}\Bigr)A_{\pi_0}(s,a) \\
&= \frac{1}{V_{\pi_0}(s)}\sum_a \pi_0(a|s)\,A_{\pi_0}(s,a)^2\\
&= \frac{\mathrm{Var}_{a\sim\pi_0(\cdot\mid s)}\!\left(A_{\pi_0}(s,a)\right)}{V_{\pi_0}(s)}.
\end{align*}

We take the ratio and insert standard normalization. For shorthand notation, let $A\sim \pi_0(\cdot |s)$.
\begin{align*}
&\frac{A_{\pi_0}(s,\widetilde\pi_+)}{A_{\pi_0}(s,\pi_+)} \\
=&
\frac{\mathrm{Cov}(A_{\pi_0}(s,A),\widetilde A_{\pi_0}(s,A))/\widetilde V_{\pi_0}(s)}
     {\mathrm{Var}(A_{\pi_0}(s,A))/V_{\pi_0}(s)} \\
=&
\frac{\mathrm{Cov}(A_{\pi_0}(s,A),\widetilde A_{\pi_0}(s,A))}
     {\sqrt{\mathrm{Var}(A_{\pi_0}(s,A))}\sqrt{\mathrm{Var}(\widetilde A_{\pi_0}(s,A))}}\\
& \qquad \times
\frac{\sqrt{\mathrm{Var}(\widetilde A_{\pi_0}(s,A))}/\widetilde V_{\pi_0}(s)}
     {\sqrt{\mathrm{Var}(A_{\pi_0}(s,A))}/V_{\pi_0}(s)} \\
=&
\mathrm{Cor}_{a\sim\pi_0(\cdot\mid s)}\!\left(A_{\pi_0}(s,a),\widetilde A_{\pi_0}(s,a)\right)
\cdot
\sqrt{\frac{\widetilde{\mathcal{I}}_{\pi_0}(s)}{\mathcal{I}_{\pi_0}(s)}},
\end{align*}
where $\mathcal{I}_{\pi_0}(s)=\mathrm{Var}_{a\sim\pi_0(\cdot\mid s)}(A_{\pi_0}(s,a))/V_{\pi_0}(s)^2$ and
$\widetilde{\mathcal{I}}_{\pi_0}(s)=\mathrm{Var}_{a\sim\pi_0(\cdot\mid s)}(\widetilde A_{\pi_0}(s,a))/\widetilde V_{\pi_0}(s)^2$.
\end{proof}

\section{Exact Improvement in $\rho$ under Success Conditioning.}\label{sec:exact_improvement}
Proposition~3.2 gives the triple identity
\[
\frac{\mathbb{E}_{a\sim \pi_+(\cdot\mid s)}[A_{\pi_0}(s,a)]}{V_{\pi_0}(s)}
=
\mathcal{I}_{\pi_0}(s),
\]
where $\mathcal{I}_{\pi_0}(s)$ is the action-influence
(Definition~3.1). Multiplying by $V_{\pi_0}(s)$ yields the
statewise identity
\begin{equation}
\mathbb{E}_{a\sim \pi_+(\cdot\mid s)}[A_{\pi_0}(s,a)]
=
V_{\pi_0}(s)\, \mathcal{I}_{\pi_0}(s).
\label{eq:exact-adv-Ip}
\end{equation}

We now plug \eqref{eq:exact-adv-Ip} into the performance
difference lemma. In its exact form,
\begin{equation}
\rho(\pi)
=
\rho(\pi_0)
+
\sum_s d_{\pi}(s)\,
\mathbb{E}_{a\sim \pi(\cdot\mid s)}[A_{\pi_0}(s,a)].
\label{eq:pdl-exact}
\end{equation}
(See the discussion around the performance difference lemma
in Section~\ref{sec:policy_improvement}.)

Applying \eqref{eq:pdl-exact} to $\pi=\pi_+$ and substituting
\eqref{eq:exact-adv-Ip} gives the exact improvement formula
\begin{align}
\rho(\pi_+)-\rho(\pi_0)
&=
\sum_s w(s)\, \mathcal{I}_{\pi_0}(s).
\label{eq:rho-improvement-sum}
\end{align}
for $w(s) = d_{\pi_+}(s)\, V_{\pi_0}(s)$. This shows \emph{policy improvement is exactly equal to a weighted sum across states of the action-influence.}
\end{document}